\definecolor{my_GREEN}{rgb}{0,0.5,0}
\definecolor{LightCyan}{rgb}{0.88,1,1}
\definecolor{my_Gray}{rgb}{0.85,0.85,0.85}
\title{\LARGE \bf
Biomimetic Algorithms for Coordinated Motion: Theory and Implementation
}
\author{Udit Halder$^{1}$ and Biswadip Dey$^{2}$
\thanks{This paper is an extended version of \cite{ICRA_version}.}
\thanks{This research was supported in part by the Air Force Office of Scientific Research under AFOSR Grant FA9550-10-1-0250. Experimental evaluation of the control laws was carried out in the Intelligent Servosystems Laboratory, on a physical test-bed system for synthesis of collective behavior from fundamental building blocks, supported by a FY2012 DURIP Grant from the AFOSR (FA2386-12-1-3002).}
\thanks{$^{1}$Udit Halder is with the Department of Electrical and Computer Engineering, University of Maryland, College Park, MD, USA. {\tt\small udit@umd.edu}}%
\thanks{$^{2}$Biswadip Dey is with the Department of Electrical and Computer Engineering, Institute for Systems Research, University of Maryland, College Park, MD, USA. {\tt\small biswadip@umd.edu}}%
}
\newtheorem{theorem}{Theorem}[section]
\newtheorem{proposition}[theorem]{Proposition}
\newtheorem{defn}[theorem]{Definition}
\newtheorem{remark}[theorem]{Remark}
\definecolor{myCOLOR1}{RGB}{0,0,255}
\begin{document}

\maketitle
\thispagestyle{empty}

\begin{abstract}
Drawing inspiration from flight behavior in biological settings (e.g. territorial battles in dragonflies, and flocking in starlings), this paper demonstrates two strategies for coverage and flocking. Using earlier theoretical studies on mutual motion camouflage, an appropriate steering control law for area coverage has been implemented in a laboratory test-bed equipped with wheeled mobile robots and a Vicon high speed motion capture system. The same test-bed is also used to demonstrate another strategy (based on local information), termed topological velocity alignment, which serves to make agents move in the same direction. The present work illustrates the applicability of biological inspiration in the design of multi-agent robotic collectives. 
\end{abstract}
%
\textbf{Index Terms} - Path Planning for Multiple Mobile Robots or Agents, Autonomous Navigation, Distributed Robot Systems, Wheeled Robots
%
%
%
\section{Introduction}
\label{sec:1_Intro}
%
The last few decades have witnessed an increase in research efforts towards uncovering mechanisms behind pursuit behavior \cite{DragonFly_OLBERG, Mizutani_MC_Nature, ghosh_PLOS, chen_JEB} and collective motion \cite{PigeonPaper, Ballerini_TOPO, Scale, Fish_School_JTB} in nature. Parallel to this stream of endeavors, a number of mathematical models and appropriate feedback mechanisms \cite{MC_PSK_prs, MC_3d_Vishwa, mischiati2010motion, mischiati2012dynamics, Broucke_TAC, Galloway_PRS_13, Vicsek_Model, Justh_PSK_SCL04, Cavagna_WAVE} have been introduced to bring these ideas from natural settings to unmanned vehicular systems. Moreover, some groups in the robotics community have performed successful implementation of these concepts \cite{Srinivasan_OPTIC_FLOW, Vicsek_IROS_2014}, and thereby demonstrated the power of a bio-inspired approach towards synthesizing collective motion. This current work of ours is similar in spirit, and provides an indoor demonstration of two strategies for coverage and flocking behavior.

In this paper, we have implemented mutual motion camouflage (MMC) \cite{mischiati2012dynamics} on a mobile robot test-bed. Existing literature on dragonflies \cite{DragonFly_FieldBook} provides qualitative analysis of territorial battles, wherein the trajectories display spiraling motion consistent with the theoretical predictions \cite{Mischiati_IFAC_2011}. This particular bio-inspired control algorithm inherits an appealing coverage property through the mechanism of space filling curves, and our implementations are able to reproduce coverage patterns similar to the predicted ones.

Although there has been a long history of control algorithms for flocking, almost every model of collective motion predicts diffusive transport of information. But, contrary to the existing models, recent findings \cite{Cavagna_WAVE} from starling flocks suggest that directional information within a flock propagates with an almost constant speed, and this linear growth of information can be explained by models with wave-like aspects. In the later part of this paper, we have introduced a control strategy, called topological velocity alignment (TVA), which conforms to this criterion and can explain how information about local neighbors can influence the agents in a flock to align their headings in a single common direction. Moreover, an ongoing research work has shown that empirically observed curvature values in starling flocks carry a strong resemblance with the curvature values predicted by the steering control laws associated with TVA. Hence it seems reasonable to use TVA strategy for collective motion synthesis. Furthermore, our implementation results in real robots have shown that reduction in neighborhood size and external perturbation (similar to predator attack) can split a flock into smaller subgroups.

\begin{figure}[t]
\begin{center}
  \includegraphics[width=0.45\textwidth]{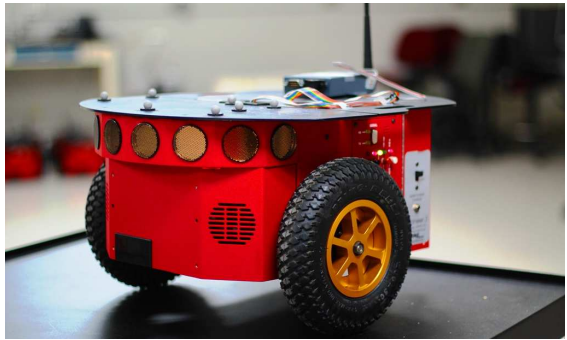}
  \caption{Mobile robot based experimental platform (Pioneer 3 DX) with two-wheel differential and caster.} 
  \label{Robot_P3_DX}
\end{center}
\end{figure}
\section{Self-steering Particle Model}
\label{sec:2_Agent}
%

By considering an agent to be a unit-mass, self-steering particle, we use natural Frenet frame equations \cite{bishop1975there} to describe its motion. This approach to describe an agent's motion is based on the fact that, while the unit tangent vector $\mathbf{x}_i$ for a given trajectory $\mathbf{r}_i$ is unique, we can choose two unit vectors $(\mathbf{y}_i, \mathbf{z}_i)$ such that $\{\mathbf{x}_i, \mathbf{y}_i, \mathbf{z}_i\}$ defines a right-handed orthogonal frame. The evolution of this frame along the agent's trajectory is given by
\begin{equation}
\begin{aligned}
\dot{\mathbf{r}}_i &= \nu_i\mathbf{x}_i 
\\
\dot{\mathbf{x}}_i &= \nu_i\left(u_i\mathbf{y}_i + v_i\mathbf{z}_i\right) 
\\
\dot{\mathbf{y}}_i &= -\nu_iu_i\mathbf{x}_i 
\\
\dot{\mathbf{z}}_i &= -\nu_iv_i\mathbf{x}_i, 
\end{aligned}
\label{natural_frenet_3D}
\end{equation}
where $\nu_i$ represents speed and ($u_i$,$v_i$) are the natural curvatures. In a similar way, by restricting the motion of an agent to a planar setting, we can model its motion as
\begin{equation}
\begin{aligned}
\dot{\mathbf{r}}_i &= \nu_{i}\mathbf{x}_i
\\
\dot{\mathbf{x}}_i &= \nu_{i} u_{i} \mathbf{y}_i
\\
\dot{\mathbf{y}}_i &= -\nu_{i} u_{i} \mathbf{x}_i,
\end{aligned}
\label{natural_frenet_2D}
\end{equation}
where $\mathbf{x}_i$ denotes normalized velocity of the $i$-th agent and $\mathbf{y}_i$ is the orthogonal rotation of $\mathbf{x}_i$ in the counter-clockwise direction. It should be noted here that this way of modeling a trajectory requires only twice-differentiability of the trajectory. 

\begin{remark}
As we use Hilare type \cite{AutoBot_Book} mobile robots (Fig~\ref{Robot_P3_DX}) as a test-bed for control algorithms, \eqref{natural_frenet_2D} provides a natural choice to model their motion.
\end{remark}
\section{Mutual Motion Camouflage (MMC)}
%
Here we consider the \textit{mutual} interaction between two agents each applying the same pursuit law, while perceiving the other one as a target. As the dynamics of MMC in a planar setting has been studied earlier \cite{mischiati2012dynamics}, we just reiterate some key results in order to have a comprehensive framework. Allowing different speeds for the agents, we begin with the following symmetry: 
\begin{equation} \label{eq:symmetry}
u_{1}\nu_{1} = u_{2}\nu_{2} = u.
\end{equation}
Then the dynamics of the relative motion vectors, namely $\mathbf{r} = \mathbf{r}_1 - \mathbf{r}_2$, $\mathbf{g} = \dot{\mathbf{r}} = \nu_1 \mathbf{x}_1 - \nu_2 \mathbf{x}_2$ and $\mathbf{h} = \mathbf{g}^\bot = \dot{\mathbf{r}}^{\bot}$, can be expressed as
\begin{equation} \label{eq:rel_model}
\begin{split}
&\dot{\mathbf{r}} = \mathbf{g}\\
&\dot{\mathbf{g}} =  u \mathbf{h}\\
&\dot{\mathbf{h}} = -u \mathbf{g}.
\end{split} 
\end{equation} 
Now we introduce three scalar shape variables defined as $\rho = |\mathbf{r}|$, $\gamma =(\mathbf{r} \cdot \mathbf{g})/{|\mathbf{r}|}$ and $\lambda=(\mathbf{r}\cdot \mathbf{h})/{|\mathbf{r}|}$. Then, according to \cite{MC_PSK_prs}, we have
\begin{equation} \label{eq:curvature_MMC}
u 
= 
-\mu\left(\frac{\mathbf{r}}{|\mathbf{r}|} \cdot \dot{\mathbf{r}}^{\bot} \right)
=
-\mu\left(\frac{\mathbf{r}}{|\mathbf{r}|} \cdot \mathbf{h} \right) 
=
-\mu \lambda,
\end{equation}
where, $ \mu > 0 $ denotes the feedback gain. As shown earlier, the dynamics of relative motion \eqref{eq:rel_model} can be reduced to yield a second order dynamics given by
\begin{equation} \label{eq:rho-gamma}
\begin{split}
& \dot{\rho} = \gamma \\
& \dot{\gamma} = \left( 1/\rho - \mu \right) \left(\delta^2 - \gamma^2 \right),
\end{split}
\end{equation} 
where, $\delta = |\mathbf{g}| = |\mathbf{h}|$ is conserved along any trajectory of \eqref{eq:rel_model}. As detailed in the original work \cite{mischiati2012dynamics}, individual trajectories can be reconstructed from the solutions of \eqref{eq:rho-gamma}. Moreover, the solutions of the reduced dynamics \eqref{eq:rho-gamma} constitute level sets for another conserved quantity, defined as
\begin{equation}\label{eq:conserved}
E(\rho,\gamma) = \rho^2(\delta^2 - \gamma^2)e^{-2\mu\rho} = E(\rho_0, \gamma_0).
\end{equation}

\subsection{Stabilization through Dissipation} \label{sec:theory_MMC_dissipation}
As it will be described in section~\ref{sec:4_Result}, implementation of the original MMC law \eqref{eq:curvature_MMC} in a laboratory environment did not result in a satisfactory performance. The trajectories diverged from the theoretical predictions, and the errors kept building up (Fig~\ref{fig:mmc_traj}). This type of behavior can be attributed to the lack of damping in the reduced dynamics \eqref{eq:rho-gamma}. As a result, every small error was kept unchecked, and accumulation of these errors gave rise to a poor performance of the MMC feedback law. Based on this observation, we considered a modified version of the feedback law which introduced a dissipation term. As $E(\rho, \gamma)$ was not staying constant during an implementation of MMC feedback law \eqref{eq:curvature_MMC}, we added a dissipative term in the feedback control to counter-act any deviation from the predicted trajectories. The resultant control law can be expressed as
\begin{equation} \label{eq:dissipation}
u_{tot} = u + u_{dis} = - \mu \lambda + k_d \lambda \gamma \big( E(\rho, \gamma) - E_d \big),
\end{equation}      
where $E_d$ is set as the initial value of the conserved quantity $E(\rho, \gamma)$ ($E_d = E(\rho_0, \gamma_0)$). Earlier research \cite{mischiati2010motion} has shown that the modified control law \eqref{eq:dissipation} with $k_d > 0$ makes the periodic orbit (with energy $E_d$) orbitally asymptotically stable, and the corresponding domain of attraction is characterized by $\{(\rho, \gamma): \rho > 0, -\delta < \gamma < \delta, (\rho, \gamma) \neq (1/\mu, 0)\}$.
\section{Topological Velocity Alignment (TVA)}
%
Here we formalize the strategy of topological velocity alignment (TVA), and assume that each member in a group of $n$-agents uses this strategy to move together while keeping its heading parallel to the neighborhood center of mass velocity. Letting $\mathcal{N}_i$ denote the neighborhood of the $i$-th agent, the center of mass (COM) velocity of this neighborhood is given by
\begin{equation}
\mathbf{v}_{_{COM}}
=
\frac{1}{|\mathcal{N}_i|}\sum_{j \in \mathcal{N}_i}\nu_j\mathbf{x}_j, 
\label{V-COM}
\end{equation}
where $|\mathcal{N}_i|$ represents the number of neighbors influencing the $i$-th agent. Next, by assuming that $\mathbf{v}_{_{COM}}$ does not vanish to zero, we define the direction of the center of mass motion as
\begin{equation}
\mathbf{x}_{\mathcal{N}_i} 
= \frac{\mathbf{v}_{_{COM}}}{|\mathbf{v}_{_{COM}}|}.
\label{T_N}
\end{equation}
It should be noted that $\mathbf{x}_{\mathcal{N}_i}$ is \textit{not well-defined over a thin set in the state space}. As the chance of getting into the thin set is \textit{essentially zero}, we can overlook this situation for all practical purposes. Now we introduce a contrast function
\begin{equation}
\Theta_i 
= \frac{1}{2} \left(\mathbf{x}_{\mathcal{N}_i} - \mathbf{x}_i \right) \cdot \left(\mathbf{x}_{\mathcal{N}_i} - \mathbf{x}_i \right) 
= 1 - \mathbf{x}_i \cdot \mathbf{x}_{\mathcal{N}_i},
\label{CoSt}
\end{equation}
as a quantitative measure for the misalignment between the heading of an agent and the direction of motion of its neighborhood center of mass. Clearly, this contrast function ($\Theta_i$) assumes its minimum value ($= 0$) whenever the $i$-th agent's velocity is aligned with its neighborhood center of mass velocity, and increases monotonically with increase in the misalignment between them. Thus, $\Theta_i$ can be interpreted as a measure of departure from our goal of achieving alignment.

Next, by assuming a non-zero velocity for the neighborhood center of mass ($\mathbf{v}_{_{COM}} \neq 0$), we propose a control law
\begin{equation}
\begin{aligned}
u_i 
&= 
\mu\left(\frac{\mathbf{x}_{\mathcal{N}_i} \cdot \mathbf{y}_i}{\nu_i}\right) 
\\
v_i 
&=
\mu\left(\frac{\mathbf{x}_{\mathcal{N}_i} \cdot \mathbf{z}_i}{\nu_i}\right),
\end{aligned}
\label{TVA_Rule}
\end{equation}
where $\mu > 0$ denotes a positive gain, and $\mathbf{y}_i$, $\mathbf{z}_i$ carry their usual meaning. Alternatively, lateral acceleration for this choice of control laws \eqref{TVA_Rule} can be expressed as
\begin{eqnarray}
\textrm{a}_i^{{lat}} 
=
\mu \nu_i \big( \mathbf{x}_{\mathcal{N}_i} - \big[\mathbf{x}_{\mathcal{N}_i} \cdot \mathbf{x}_i \big] \mathbf{x}_i \big),
\label{lat-accln}
\end{eqnarray}
and this provides a physical intuition behind \eqref{TVA_Rule} as the lateral acceleration is proportional to the projection of the normalized velocity of its neighborhood center of mass onto the transverse of its own direction of motion.

\begin{remark}
Earlier works \cite{Justh03steeringlaws,Justh_PSK_SCL04} have considered a very similar form of this control law with three components for attraction (while the agents are far away), repulsion (to avoid collision) and velocity alignment. However, the control law introduced in this current study considers only velocity alignment, but extends the scope from a planar setting to a three dimensional environment. Moreover it relaxes the assumption on uniform speed of the collective by allowing the agents non-uniform and time-varying speed profiles. This relaxation plays an important role in the context of applying this control law to a group of heterogeneous agents.
\end{remark}

%
%
%
%
%

%
%
%
%
%
\subsection{Special Case: A planar 2-agent system}
As analysis of an $n$-agent system with neighborhood defined as the set of $K$-nearest neighbors poses hard challenges, we begin by considering a special case, namely a two-agent system with the motion restricted to a planar setting. Clearly, the neighborhood center of mass never loses speed in this case (as the neighborhood is nothing but the other agent).
%

%
%
\subsubsection{State space and its reduction onto the shape space}
\label{sec:SS_2agent}
By modeling the dynamics of an agent using natural Frenet frame equations restricted to a planar setting \eqref{natural_frenet_2D}, its position and the frame vectors can be packed inside a $3 \times 3$ matrix $g_i \in SE(2)$, the Lie group of rigid motions in a plane. By excluding collocation of the agents, we define the state space of the system as
\begin{equation}
\mathcal{M}_{state} = \Big\{ g_1,g_2 \in SE(2) \times SE(2) \Big| g_1e_3 \neq g_2e_3 \Big\} \label{2agent_state_space}
\end{equation}
where $e_3 = [0\;0\;1]^T$ and $g_i = \left[\begin{array}{ccc} \mathbf{x}_i & \mathbf{y}_i & \mathbf{r}_i \\ 0 & 0 & 1 \end{array}\right]$. 

In terms of this Lie-group formulation the system dynamics can be represented as 
\begin{equation}
\dot{g}_i = g_i\xi_i(u_i) = g_i \nu_i \big( A_1 u_i + A_2 \big) \label{Lie_grp_dyna_2}
\end{equation}
where $A_1$ and $A_2$ represent infinitesimal generators of planar rotation and translation, respectively. Moreover $A_1$ and $A_2$ can generate the Lie-algebra under bracketing. As we are interested in steering laws which leave our system dynamics invariant under rigid motion, we can formulate a reduction to the \textit{shape space}, a quotient manifold $\mathcal{M}_{state}/SE(2)$ of relative positions and velocities of the agents. We define $g \in SE(2)$ as
\begin{equation}
g 
= 
g_1^{-1}g_2 
= 
\left[\begin{array}{ccc} 
\mathbf{x}_1\cdot\mathbf{x}_2 & \mathbf{x}_1\cdot\mathbf{y}_2 & \mathbf{x}_1\cdot\mathbf{r}\\ 
\mathbf{y}_1\cdot\mathbf{x}_2 & \mathbf{y}_1\cdot\mathbf{y}_2 & \mathbf{y}_1\cdot\mathbf{r}\\
0 & 0 & 1
\end{array}\right],
\label{Shape_LIE_grp}
\end{equation}
where $\mathbf{r} \triangleq \mathbf{r}_2 - \mathbf{r}_1$ denotes the baseline vector. Now the shape space for the planar two-agent system can be defined as 
\begin{equation}
\mathcal{M}_{shape} = \Big\{ g \in SE(2) \Big| g_{13}^2 + g_{23}^2 \neq 0 \Big\}.
\label{2agent_shape_space}
\end{equation}
 Moreover, $g$ assumes a left-invariant dynamics on $SE(2)$ as
\begin{equation}
\dot{g} = g\xi
\label{Shape_DyNa_LIE_grp}
\end{equation}
where $\xi = \xi_2(u_2) - g^{-1}\xi_1(u_1)g \in \mathfrak{se}(2)$, and the proposed control law \eqref{TVA_Rule} depends only on the shape variable $g$ as
\begin{equation}
u_i = \mu\left[\frac{g_{\bar{i}i}}{\nu_i}\right] 
, \quad i \in \{1,2\}, \quad \bar{i} = \{1,2\} \setminus \{i\}.
\label{fbLAW_shape}
\end{equation}
Therefore from \cite{Justh_PSK_SCL04} it can be concluded that the reduced dynamics \eqref{Shape_DyNa_LIE_grp} evolves on the shape space $\mathcal{M}_{shape}$.

%
%
\begin{figure}[h]
\psfrag{r}[][][1]{$\qquad \mathbf{r} = \mathbf{r}_2 - \mathbf{r}_1$}
\psfrag{y1}[][][1]{$\mathbf{y}_1$} 
\psfrag{x1}[][][1]{$\mathbf{x}_1$} 
\psfrag{r1}[][][1]{$\mathbf{r}_1$}
\psfrag{y2}[][][1]{$\mathbf{y}_2$}
\psfrag{x2}[][][1]{$\mathbf{x}_2$}
\psfrag{r2}[][][1]{$\qquad\mathbf{r}_2$} 
\psfrag{t1}[][][1]{$\theta_1$} 
\psfrag{t2}[][][1]{$\;\theta_2$}
\psfrag{phi1}[][][1]{$\psi$} 
\psfrag{phi2}[][][1]{$\phi$}
\psfrag{psi}[][][1]{$\vartheta$} 
\begin{center}
  \includegraphics[width=0.65\textwidth]{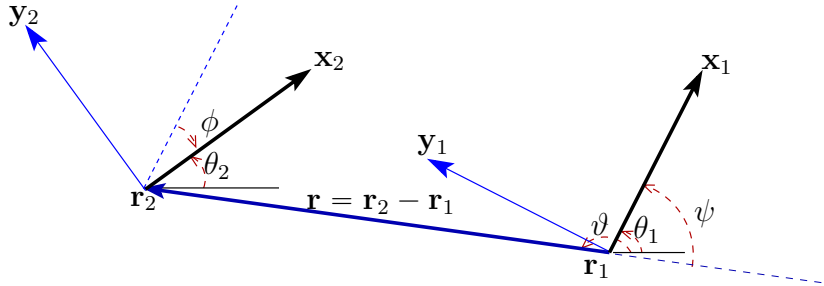}
  \caption[Illustration of scalar shape variables for a 2-agent system]{Illustration of scalar shape variables ($\rho,\psi,\phi$) used to parametrize the shape space $\mathcal{M}_{shape}$.} 
  \label{Scalar_Shapes}
\end{center}
\end{figure}

%
%
\subsubsection{Shape dynamics}
\label{sec:SD_2agent}
Now, through polar parametrization, we introduce some geometrically meaningful scalar variables to parametrize the shape space. By defining
\begin{equation}
\begin{aligned}
\mathbf{r} &= \mathbf{r}_2 - \mathbf{r}_1 = \rho \textrm{e}^{\mathrm{i}\vartheta}
\\
\mathbf{x}_1 &= \textrm{e}^{\mathrm{i}\theta_1} 
\\
\mathbf{x}_2 &= \textrm{e}^{\mathrm{i}\theta_2},
\end{aligned}
\label{base_polar}
\end{equation}
we introduce $\psi (= \pi - \vartheta + \theta_1)$ and $\phi (= \theta_1 - \theta_2)$ to represent the relative orientation of the velocity vectors. Clearly, $\psi$ represents the relative orientation of $\mathbf{x}_1$ with respect to the baseline vector $\mathbf{r}$, and $\phi$ represents the misalignment in velocity directions. From \eqref{Shape_LIE_grp} one can notice that $g \in \mathcal{M}_{shape}$ can be represented in terms of these scalar shape variables as
\begin{equation}
g
=
\left[\begin{array}{ccc} 
 \cos\phi & \sin\phi & -\rho\cos\psi\\ 
-\sin\phi & \cos\phi &  \rho\sin\psi\\
0 & 0 & 1
\end{array}\right].
\label{Shape_scalar_LIE}
\end{equation}
Then, by a straightforward calculation, one can show that the shape dynamics are given in terms of the scalar variables by
\begin{align}
\dot{\rho} 
&= 
\nu_1\cos\psi - \nu_2\cos(\psi - \phi) 
\nonumber \\
\dot{\psi} 
&= 
\nu_1 u_1 - \frac{1}{\rho}\big[ \nu_1 \sin\psi - \nu_2 \sin(\psi - \phi) \big] 
\label{shape_dynamics_CL}\\
\dot{\phi} 
&= 
\nu_1 u_1 - \nu_2 u_2.
\nonumber
\end{align}

%
%
\subsubsection{Analysis of topological velocity alignment}
Here we consider a particular context of the two-agent planar system wherein each agent employs the strategy for topological velocity alignment (TVA). In terms of the original state variables the contrast functions take the form 
\begin{equation}
\Theta_i = \frac{1}{2} \left(\mathbf{x}_{\bar{i}} - \mathbf{x}_i\right) \cdot \left(\mathbf{x}_{\bar{i}} - \mathbf{x}_i\right), \quad i \in \{1,2\} \label{Cntrst_2agnt}
\end{equation}
and the $i$-th agent is declared to attain TVA if $\Theta_i = 0$. From \eqref{Cntrst_2agnt} one can immediately notice equality of the contrast functions for both agents, and hence we define a common contrast function $\Theta = \Theta_1 = \Theta_2$. Clearly, this common contrast function can be represented in terms of scalar shape variables as
\begin{equation}
\Theta
=
1 - \cos\phi,
\label{GAMMA_shape}
\end{equation}
and hence we can conclude that $\Theta = 0$ \textit{if and only if} $\phi = 0$.

Next, for this two-agent planar system we define a 2-dimensional TVA manifold $\mathcal{M}_{TVA} \subset \mathcal{M}_{shape}$ as
\begin{equation}
\mathcal{M}_{TVA}
= 
\Big\{\rho, \psi, \phi \in \mathcal{M}_{shape} \Big| \phi = 0\Big\}.
\end{equation}
Moreover, from \eqref{fbLAW_shape} one can notice that the feedback law can be expressed in terms of shape variables, taking the form
\begin{equation}
\begin{array}{rcl}
u_1 & = & - \left(\frac{\mu}{\nu_1}\right)\sin\phi \\
u_2 & = & \quad  \left(\frac{\mu}{\nu_2}\right)\sin\phi.
\label{curvature_shape}
\end{array}
\end{equation}
Clearly, the steering control becomes identically zero on the TVA manifold, and as a consequence the mismatch in velocity direction remains identically zero \eqref{shape_dynamics_CL}. Now we will formally introduce the notion of \emph{invariance}.

\begin{defn}[Invariant Manifold]
A manifold $\mathcal{M}$ is said to be invariant under the flow of a vector field $X$ on $\mathcal{M}$ if for any $m \in \mathcal{M}$, $F_t(m)\in \mathcal{M}$ for small $t > 0$, where $F_t(\cdot)$ denotes the flow of $X$. One can show that this condition is equivalent to $X$ being tangent to the manifold $\mathcal{M}$.
\label{Inv_Mani_Defn}
\end{defn}

If both agents employ a steering control of the form \eqref{curvature_shape}, the closed loop dynamics for a two-agent planar system can be represented as
\begin{align}
\dot{\rho} 
&= 
\nu_1\cos\psi - \nu_2\cos(\psi - \phi) 
\nonumber \\
\dot{\psi} 
&= 
- \mu \sin\phi - \frac{1}{\rho}\big[ \nu_1 \sin\psi - \nu_2 \sin(\psi - \phi) \big] 
\label{shape_dynamics_CL_TVA}\\
\dot{\phi} 
&= 
-2\mu \sin\phi.
\nonumber
\end{align}
We should note that prohibition on collocation, i.e. $\rho > 0$, is not enforced by these dynamics. Now we state the following proposition associated with the closed loop shape behavior.
\begin{proposition}
The topological velocity alignment manifold $\mathcal{M}_{TVA} \subset \mathcal{M}_{shape}$ is invariant under the closed loop shape dynamics \eqref{shape_dynamics_CL_TVA}. Moreover if $\gamma(t)\in \mathcal{M}_{shape}$ is a trajectory of \eqref{shape_dynamics_CL_TVA} which does not have a finite escape time, and $\Theta(0) \neq 2$, then
\begin{equation}
\Theta(t) \rightarrow 0 \quad \textrm{as} \quad t \rightarrow \infty,
\label{attract_M_FL}
\end{equation}
i.e. $\gamma(t)$ converges asymptotically to $\mathcal{M}_{TVA}$.
\label{M_TVA_inv_2D}
\end{proposition}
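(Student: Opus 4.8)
The plan is to exploit the fact that the $\phi$-equation in the closed-loop dynamics \eqref{shape_dynamics_CL_TVA}, namely $\dot\phi = -2\mu\sin\phi$, is autonomous and completely decoupled from $\rho$ and $\psi$. Since $\Theta = 1-\cos\phi$ by \eqref{GAMMA_shape}, the entire claim reduces to an analysis of this scalar ODE on the circle $\phi \in S^1$, together with the hypothesis that the full trajectory $\gamma(t)$ persists for all $t \ge 0$.

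For invariance I would use the tangency characterization in Definition~\ref{Inv_Mani_Defn}. The manifold $\mathcal{M}_{TVA}$ is exactly the zero level set $\{\phi = 0\}$, so the only component of the vector field transverse to $\mathcal{M}_{TVA}$ is $\dot\phi$. Evaluating on the manifold gives $\dot\phi|_{\phi=0} = -2\mu\sin 0 = 0$, so the closed-loop vector field is everywhere tangent to $\mathcal{M}_{TVA}$; equivalently, $\phi = 0$ is an equilibrium of the decoupled scalar flow, so any trajectory that starts with $\phi = 0$ stays there. This settles invariance.

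For attractivity I would take $\Theta = 1-\cos\phi$ itself as a Lyapunov-type function for the $\phi$-subsystem and compute $\dot\Theta = \sin\phi\,\dot\phi = -2\mu\sin^2\phi \le 0$, with equality precisely at $\sin\phi = 0$, i.e. at the two equilibria $\phi = 0$ ($\Theta = 0$) and $\phi = \pi$ ($\Theta = 2$). A phase-line analysis on $S^1$ shows that $\phi = 0$ is asymptotically stable (linearization $\dot\phi \approx -2\mu\phi$) while $\phi = \pi$ is unstable ($\dot\phi \approx +2\mu(\phi-\pi)$); between the two equilibria $\phi$ is strictly monotone. Hence the basin of attraction of $\phi = 0$ is all of $S^1$ except the single point $\phi = \pi$. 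The hypothesis $\Theta(0)\neq 2$ is exactly the statement $\phi(0)\neq\pi$, i.e. the trajectory does not start at the unstable equilibrium, and therefore $\phi(t)\to 0$ and $\Theta(t)\to 0$ as $t\to\infty$.

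The remaining point, and the one requiring care, concerns the role of the no-finite-escape-time hypothesis. The $\phi$-dynamics is itself harmless ($|\dot\phi|\le 2\mu$, so $\phi$ never blows up), but the coupled $\rho$ and $\psi$ equations can develop a finite escape time --- for instance through the $1/\rho$ term in $\dot\psi$ as $\rho\to 0$ --- which would leave $\gamma(t)$ defined only on a bounded interval and render the limit $t\to\infty$ meaningless. The assumption that $\gamma(t)$ has no finite escape time guarantees forward completeness of the full shape trajectory, so that the scalar convergence $\phi(t)\to 0$ obtained above is genuinely an assertion about $\gamma(t)$ for all $t\ge 0$. Since $\mathcal{M}_{TVA}=\{\phi=0\}$ and the shape-space distance of $\gamma(t)$ to the manifold is controlled by $\Theta(t)=1-\cos\phi(t)$, the convergence $\Theta(t)\to 0$ is exactly the asymptotic convergence of $\gamma(t)$ to $\mathcal{M}_{TVA}$. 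I expect this bookkeeping --- separating the well-behaved $\phi$-subsystem from the possibly ill-behaved $(\rho,\psi)$ pair, and pinning down where completeness is actually needed --- to be the only real obstacle, since the convergence computation itself is immediate.
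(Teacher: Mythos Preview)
Your proposal is correct and follows essentially the same route as the paper: both reduce the problem to the decoupled $\phi$-equation and compute $\dot\Theta = \sin\phi\,\dot\phi = -2\mu\sin^2\phi$. The only difference is in the final convergence step: the paper rewrites this as the logistic-type ODE $\dot\Theta = -2\mu\Theta(2-\Theta)$ and integrates it explicitly to obtain the closed form $\Theta(t) = 2e^{-4\mu t}/(C + e^{-4\mu t})$ with $C = 2/\Theta(0)-1$, whereas you argue qualitatively via phase-line analysis and linearization at the two equilibria. Your argument is perfectly sound and arguably cleaner; the paper's explicit solution has the modest advantage of displaying the exponential rate $4\mu$ directly. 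Your discussion of the finite-escape-time hypothesis is also more careful than the paper's, which does not comment on it in the proof.
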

\begin{proof}
From \eqref{GAMMA_shape} and \eqref{shape_dynamics_CL_TVA} we have
\begin{equation}
\dot{\Theta} 
= \dot{\phi}\sin\phi
=  -2\mu \sin^2\phi 
= -2\mu \Theta \big(2 - \Theta\big)
\label{Theta_dot_shape}
\end{equation}
and hence $\mathcal{M}_{TVA} \subset \mathcal{M}_{shape}$ is an invariant manifold under the closed loop shape dynamics. This equation \eqref{Theta_dot_shape} also implies that the agents will keep on moving in the opposite direction ($\Theta(t) = 2$) if their initial directions were opposite to each other ($\Theta(0) = 2$). 

Moreover, by assuming  $\Theta(0) \in (0,2)$, we have 
\begin{displaymath}
\Theta(t)
= \frac{2e^{-4\mu t}}{C  + e^{-4\mu t}}
\end{displaymath}
where the constant $C$ is defined as $C = \frac{2}{\Theta(0)}-1$. Since $ e^{-4\mu t} \rightarrow 0$ as $t \rightarrow \infty$, we have $\Theta(t) \rightarrow 0$ as $t \rightarrow \infty$.
\end{proof}

Next we focus on the restricted dynamics on $\mathcal{M}_{TVA}$, and by substituting $\phi = 0$, \eqref{shape_dynamics_CL_TVA} yields
\begin{equation}
\begin{aligned}
\dot{\rho} 
&= 
\big(\nu_1 - \nu_2\big)\cos\psi 
\\
\dot{\psi} 
&= 
- \frac{1}{\rho}\big( \nu_1 - \nu_2 \big) \sin\psi.
\end{aligned}
\label{reduced_dynamics_TVA}
\end{equation}
Now, by assuming $\nu_1 - \nu_2$ to be non-zero, the evolution of a phase plane trajectory can be described as
\begin{equation}
\frac{d\rho}{d\psi} = -\frac{\rho \cos\psi}{\sin\psi},
\label{phase_plane_SLOPE}
\end{equation}
which can then be integrated to show that the quantity $\rho(t)\sin\psi(t)$ is conserved along any trajectory on the TVA manifold ($\mathcal{M}_{TVA}$). Hence it is clear that the region \{$\rho>0,\pi>\psi>0$\} (or similarly \{$\rho>0,0>\psi>-\pi$\}) is positively invariant under the restricted dynamics \eqref{reduced_dynamics_TVA}. The corresponding phase-portrait is shown in Fig~\ref{Scalar_Shapes_PP}.
%
%
\begin{figure}[h]
\psfrag{x}[][][1]{$\rho$}
\psfrag{y}[][][1]{$\psi$} 
\begin{center}
  \includegraphics[width=0.75\textwidth]{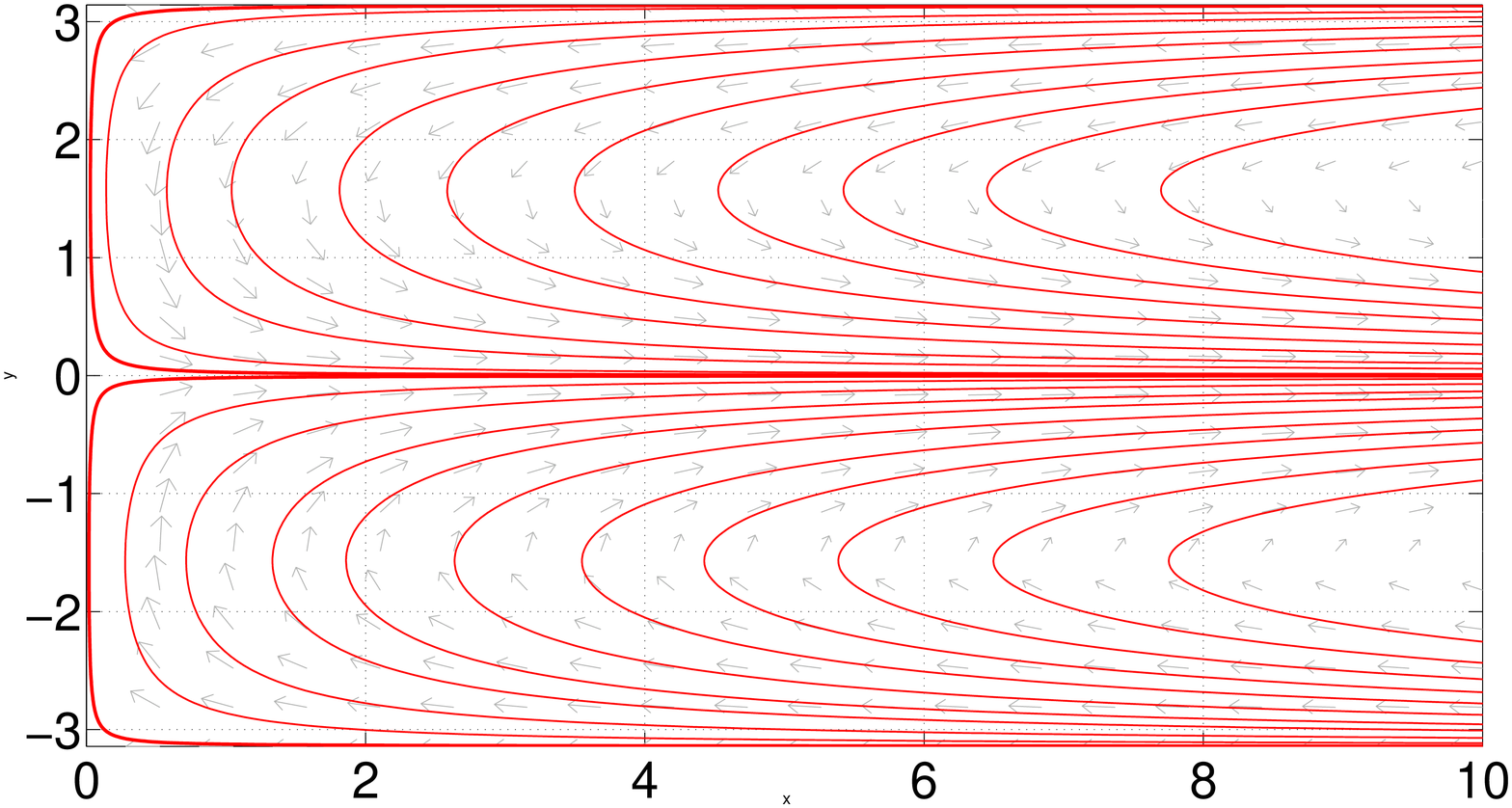}
  \caption[Phase portraits for the restricted dynamics]{Phase portraits for the restricted dynamics \eqref{reduced_dynamics_TVA}.} 
  \label{Scalar_Shapes_PP}
\end{center}
\end{figure}

%
%
%
%
%
\subsection{Algorithm for an $n$-agent system in a three-dimensional setting}
In this sub-section we focus toward TVA in its true sense. By bringing in an additional neighbor into consideration whenever $\mathbf{v}_{_{COM}}$ becomes zero, we propose an algorithmic way (Algorithm~\ref{TVA_Algo}) to implement TVA in a real system. Clearly, this provides a way to avoid ill-posedness associated with $\mathbf{v}_{_{COM}}$ being zero because non-zero speeds of individual agents ensure that considering an extra neighbor will make an otherwise zero $\mathbf{v}_{_{COM}}$ non-zero. Moreover, this approach towards flocking can easily be restricted to a planar setting, and it has been demonstrated in what follows.

\begin{algorithm}[t!]
\KwData{
Initial Time - $t_{initial}$; Final Time - $t_{final}$;
Sampling Interval - $\Delta$; Number of Agents - $n$;
Initial Position and Orientation - $\{g_i\}_{i=1}^n$; Neighborhood Size - $K$
}
\Begin{
\textbf{Initialize}: $t_{current} \longleftarrow t_{initial}$ \;
\For{$i$ = $1$ to $n$}{
\textbf{Initialize}: State - $X_i \longleftarrow g_i$ \;
}
\While{$t_{current} \leq t_{final}$}{
\For{$i$ = $1$ to $n$}{
\textbf{Define}: $\mathcal{N}_i$ - the set of $K$-nearest neighbors \;
\textbf{Compute}: Neighborhood center of mass velocity - $\mathbf{v}_{_{COM}}$ \;
\If{$\mathbf{v}_{_{COM}} = 0$}
{
\textbf{Define}: $\mathcal{N}_i$ - the set of $K+1$-nearest neighbors \;
\textbf{Compute}: Neighborhood center of mass velocity - $\mathbf{v}_{_{COM}}$ \;
}
\textbf{Compute}: Steering control - $u_i$, $v_i$ \;
}
\textbf{Implement}: Steering Control - $\{u_i,v_i\}_{i=1}^n$ \;
\textbf{Update}: State - $\{X_i\}_{i=1}^n$ \;
\textbf{Update}: Time - $t_{current} \longleftarrow t_{current} + \Delta$ \;
}
}
\caption{Topological Velocity Alignment - 3D \label{TVA_Algo}}
\end{algorithm}
\section{Implementation Results}
\label{sec:4_Result}
%

%
%
\subsection{Experimental Setup}

Our experimental test-bed is comprised of Pioneer 3 DX wheeled robots from Adept MobileRobots \cite{P3_DX_Manual}. These compact differential-drive mobile robots are equipped with reversible DC motors, high-resolution motion encoders and 19cm wheels, and the onboard computation is done via 32-bit Renesas SH2-7144 RISC microprocessor, including the P3-SH microcontroller with ARCOS. The sensors on the robot include eight forward-facing ultrasonic (sonar) sensors. ARIA \cite{ROSARIA} provides an interface for controlling and receiving data from the robot, and communication with the robot for sending control commands (\textit{forward velocity} and \textit{turning rate}) is done via 802.11-b/g/n networking. The width of the robot is 380 mm and it has a swing radius of 260 mm.\par
Algorithm implementation (i.e, feedback law computation) has been done in C++ using ROS \cite{ROS_Manual}, along with ROS-ARIA, as the interfacing robotics middleware. The experiments have been carried out in a laboratory environment equipped with a sub-millimeter accurate Vicon motion capture system \cite{Vicon_Manual}. We use a Dell workstation to run ROS, and this computer is connected to the Vicon server via a dedicated Ethernet connection.\par 
The Vicon system captures the motion of the robots and sends out the position and heading data to the computer running ROS. The control law program listens to this data, and transmits the individual \textit{turning rates} (as individual speeds are assumed to be constant). Both of these operations are carried out at a frequency of 25 Hz. As the velocity vector ($\dot{\mathbf{r}}_i^k$, with $k$ denoting the time index) is aligned along the robot heading, $\mathbf{x}_i^k$ and $\mathbf{y}_i^k$ can be directly computed from the heading data. Then, the curvature variable $u_i^k$ is evaluated from the corresponding control laws (\ref{eq:dissipation}, \ref{TVA_Rule}), and the \textit{turning rate} $\omega_i^k (i = 1,2,\ldots,n)$ (in degrees/sec) is computed as:
\begin{equation}
\omega_i^k = \left(\dfrac{180}{\pi}\right) \nu_i u_i^k.
\end{equation}

Next we will present the implementation results of the two control laws in our robotic test-bed. In this paper, we are presenting results for which the speeds of all the individual agents are same, i.e. $\nu_i = \nu_j, \forall i,j$. Though it should be noted that both control laws can be implemented with different speeds.


%
%
\subsection{Implementation of MMC}

\begin{figure*}[!h]
	\centering
	\begin{subfigure}[t]{0.48\textwidth}
		\centering
		\includegraphics[scale = 0.425, trim = 10 0 40 0, clip = false]{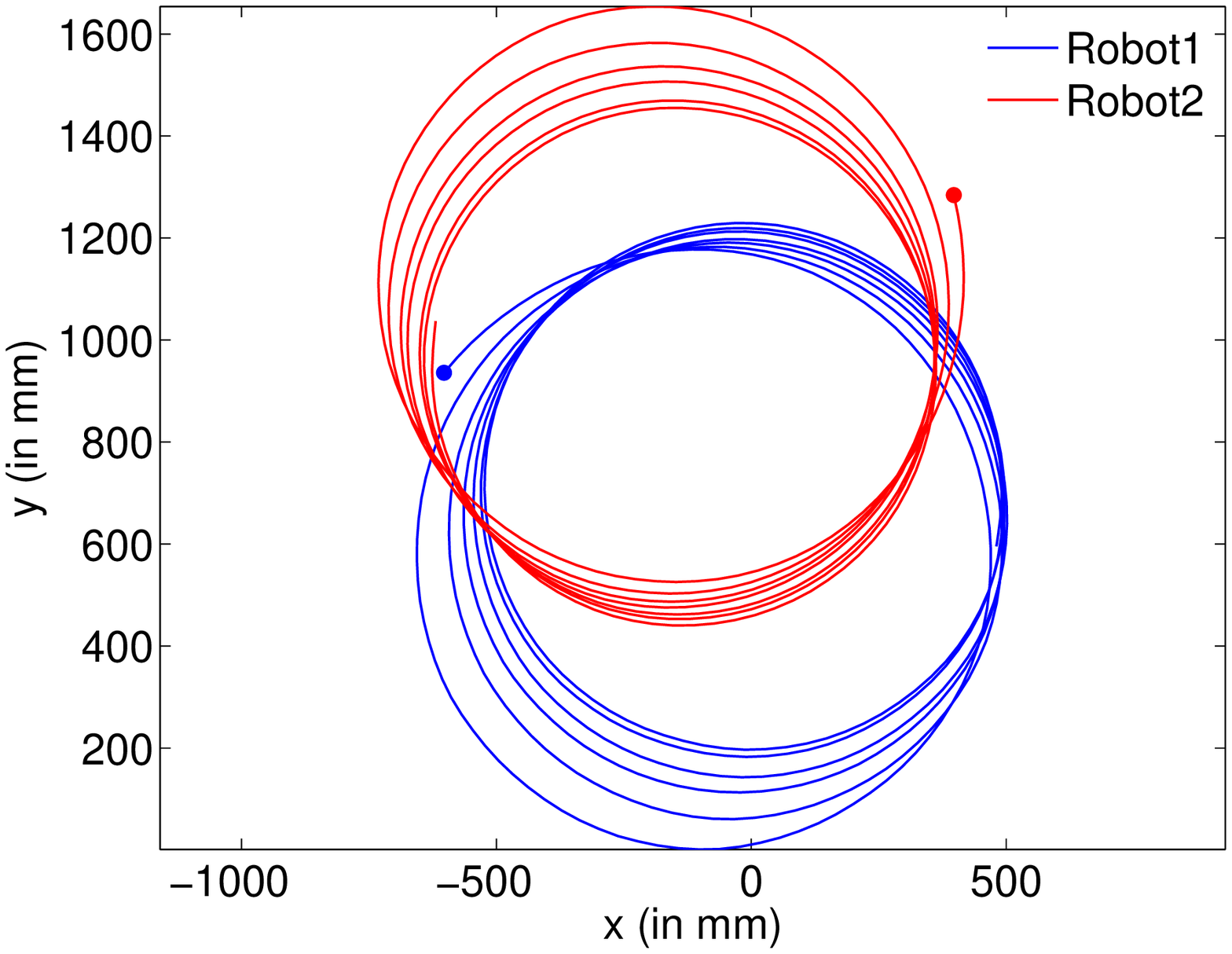}
		\caption{Trajectory (Experiment)}
	\end{subfigure}
	\hspace{0.1cm}
	\begin{subfigure}[t]{0.48\textwidth}
		\centering
		\includegraphics[scale = 0.425, trim = 10 0 40 0, clip = false]{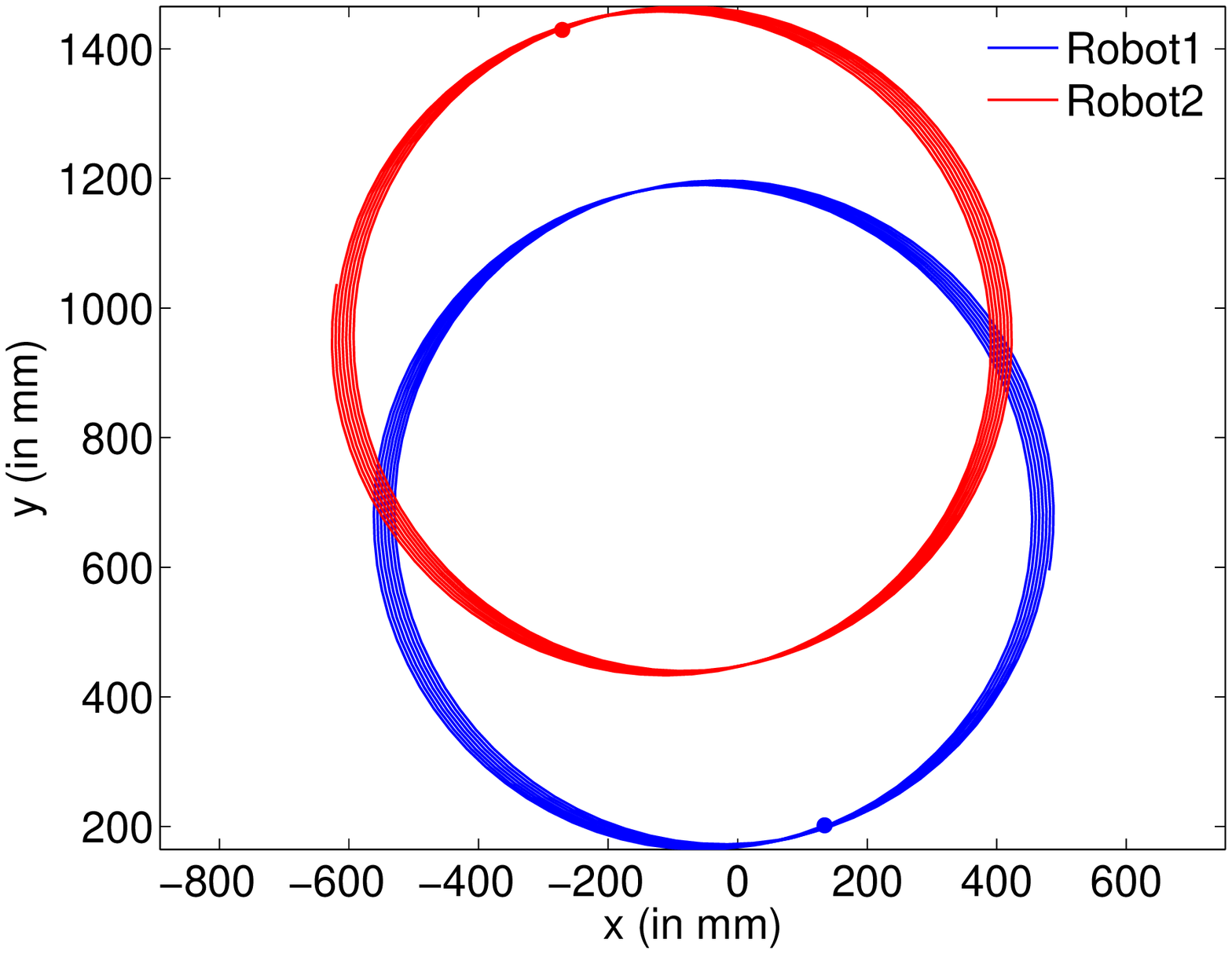}
		\subcaption{Trajectory (Simulation)}
	\end{subfigure}
	\hspace{0.1cm}
	\begin{subfigure}[t]{0.48\textwidth}
		\centering	
		\includegraphics[scale = 0.425, trim = 10 0 40 0, clip = false]{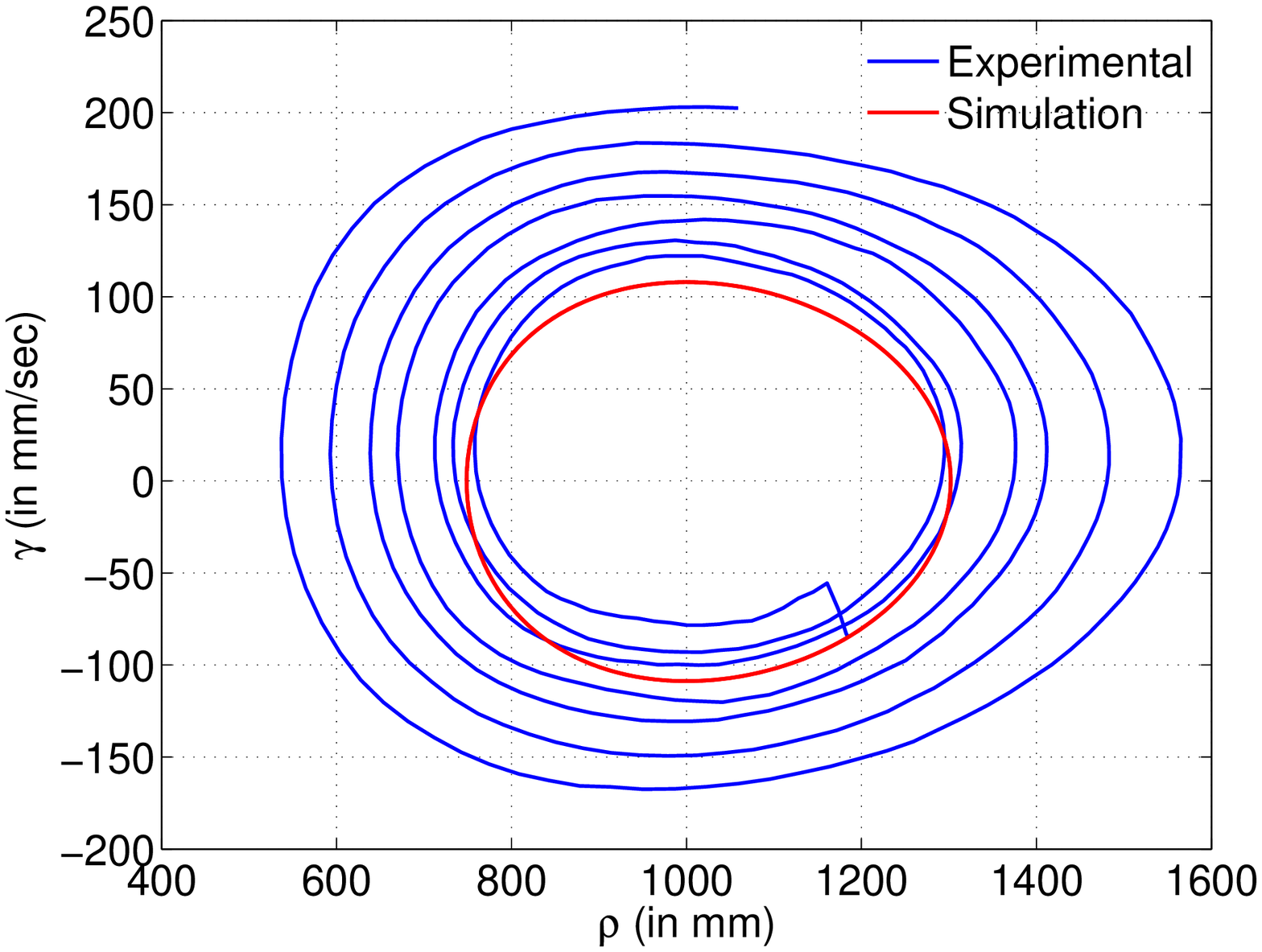}
		\caption{Phase plot}
		\label{fig:mmc_phase}
	\end{subfigure}
	\hspace{0.1cm}
	\begin{subfigure}[t]{0.48\textwidth}
		\centering	
		\includegraphics[scale = 0.425, trim = 10 0 40 0, clip = false]{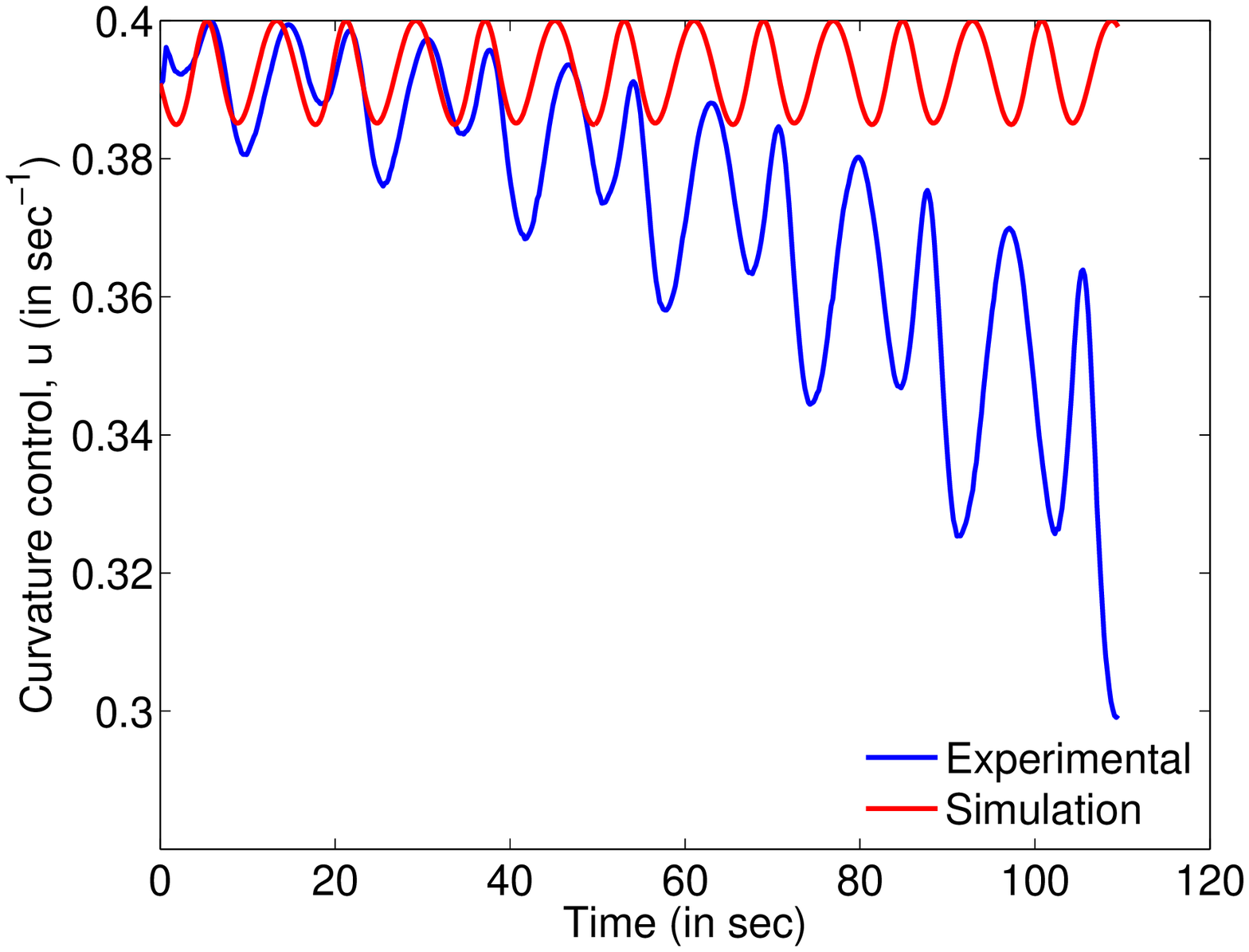}
		\caption{Curvature}
		\label{fig:mmc_curvature}
	\end{subfigure}
	\hspace{0.1cm}
	\begin{subfigure}[t]{0.48\textwidth}
		\centering	
		\includegraphics[scale = 0.425, trim = 10 0 40 0, clip = false]{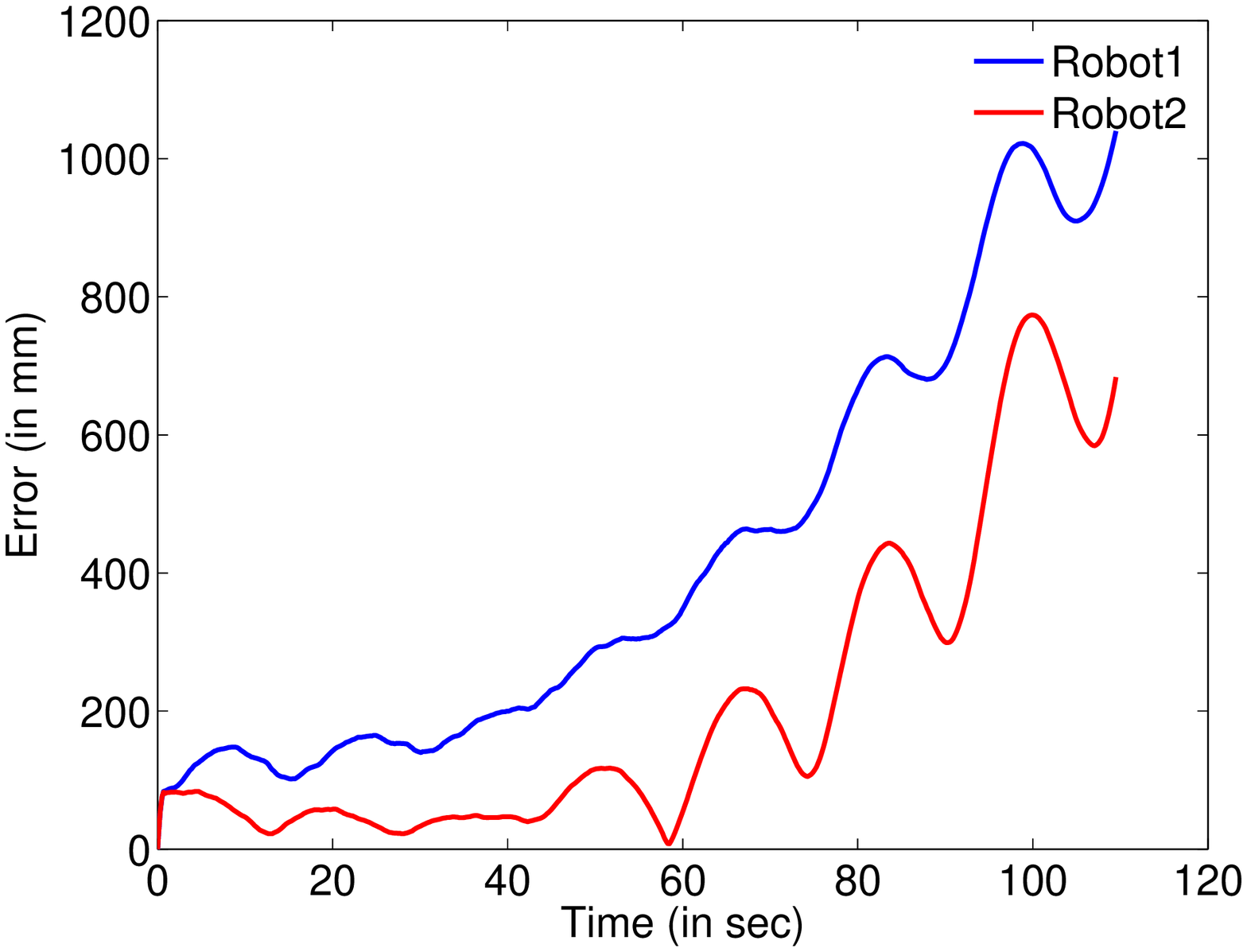}
		\caption{Error}
		\label{fig:mmc_error}
	\end{subfigure}
	\hspace{0.1cm}
	\begin{subfigure}[t]{0.48\textwidth}
		\centering	
		\includegraphics[scale = 0.425, trim = 10 0 40 0, clip = false]{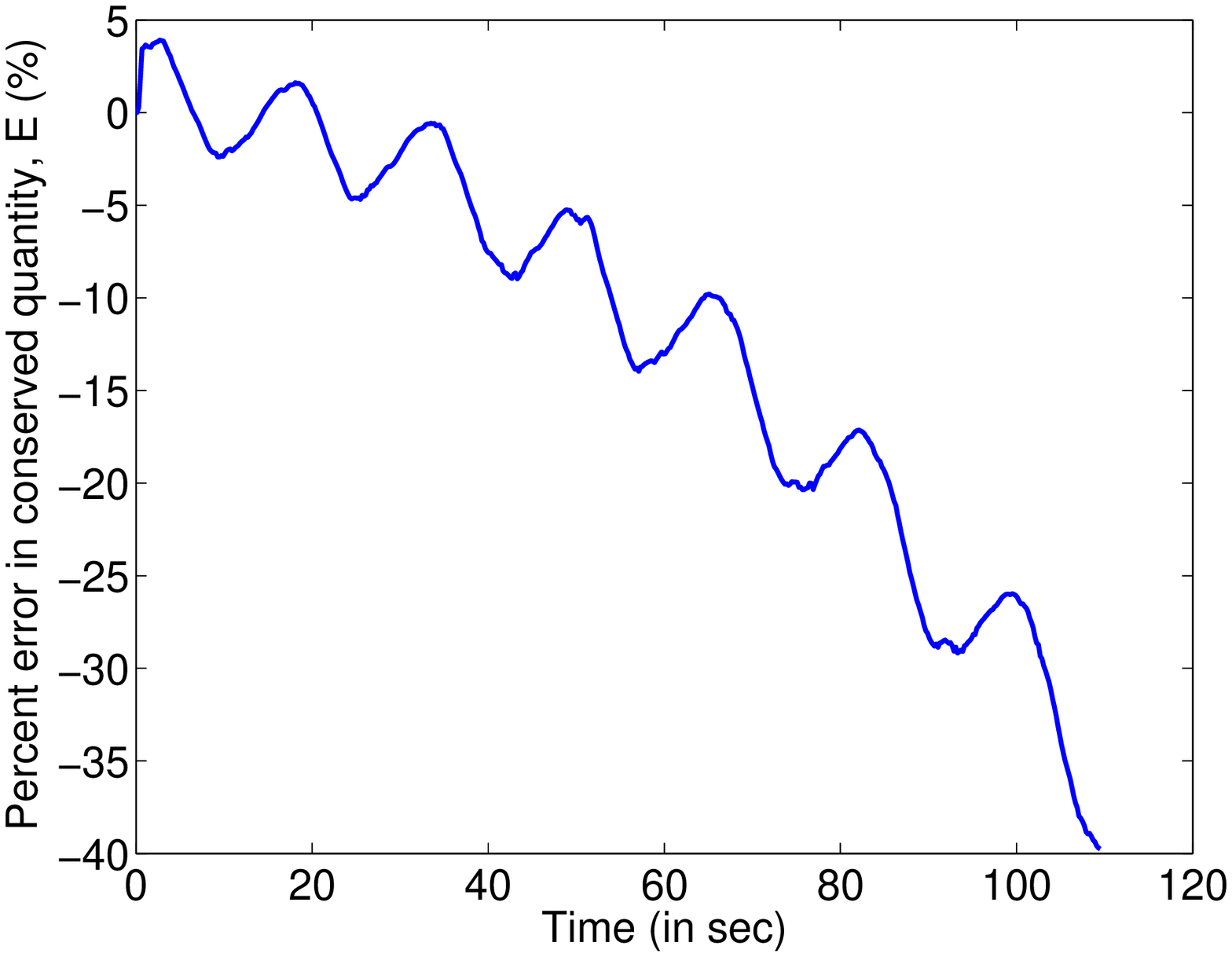}
		\caption{Percentage error in conserved quantity}
		\label{fig:mmc_conserved}
	\end{subfigure}
	\caption{Comparison between experimental and simulation results under pure MMC law in action (with $\mu = 0.001$ mm$^{-1}$ and $\nu_1 = \nu_2 = 200$ mm/sec)  }
	
	\label{fig:mmc_traj}
\end{figure*}

\begin{figure*}[!h]
	\centering
	\begin{subfigure}[t]{0.48\textwidth}
		\centering
		\includegraphics[scale = 0.425, trim = 10 0 40 0, clip = false]{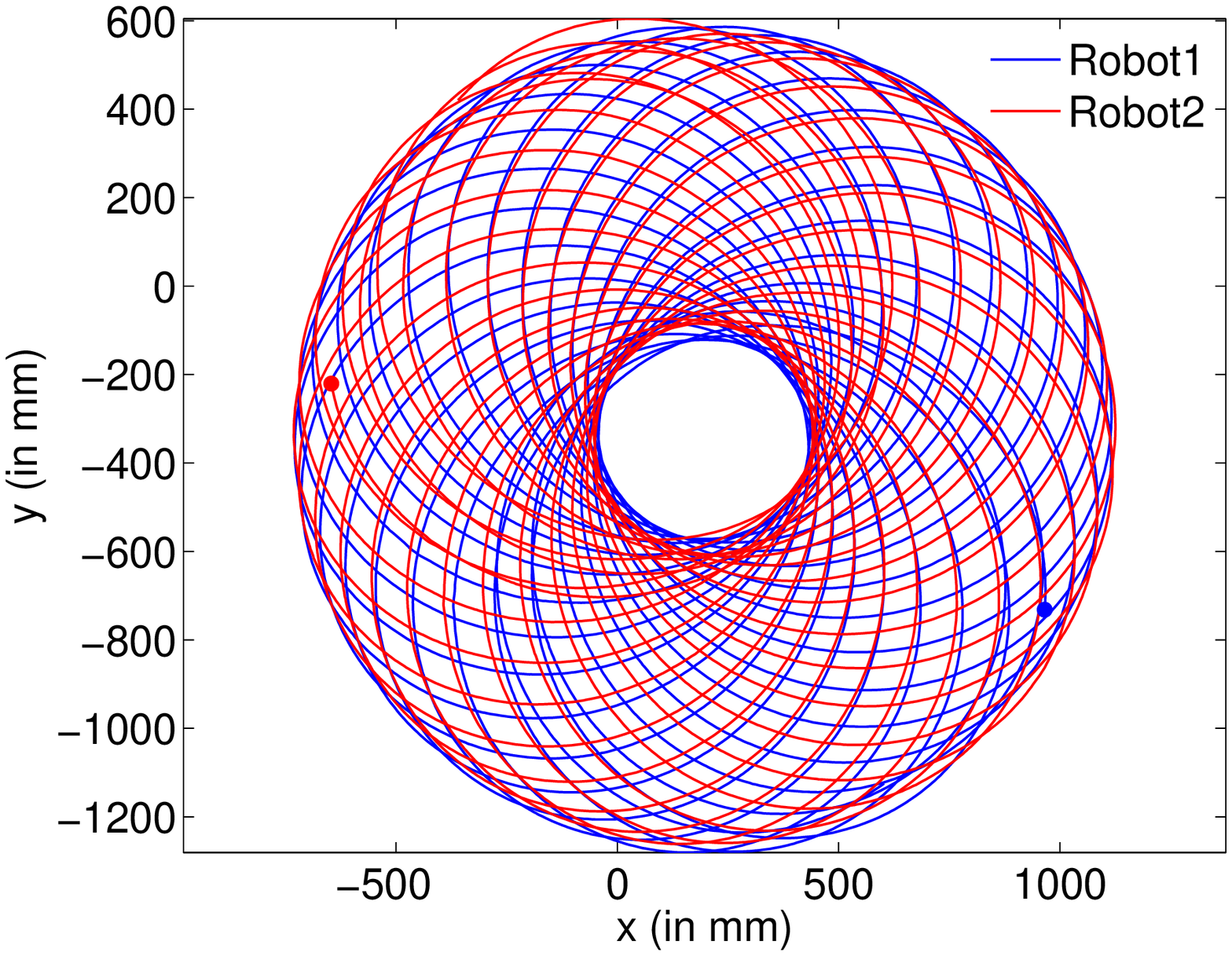}
		\caption{Trajectory (Experiment)}
		\label{fig:mmc_dis_traj}
	\end{subfigure}
	\hspace{0.1cm}
	\begin{subfigure}[t]{0.48\textwidth}
		\centering	
		\includegraphics[scale = 0.425, trim = 10 0 40 0, clip = false]{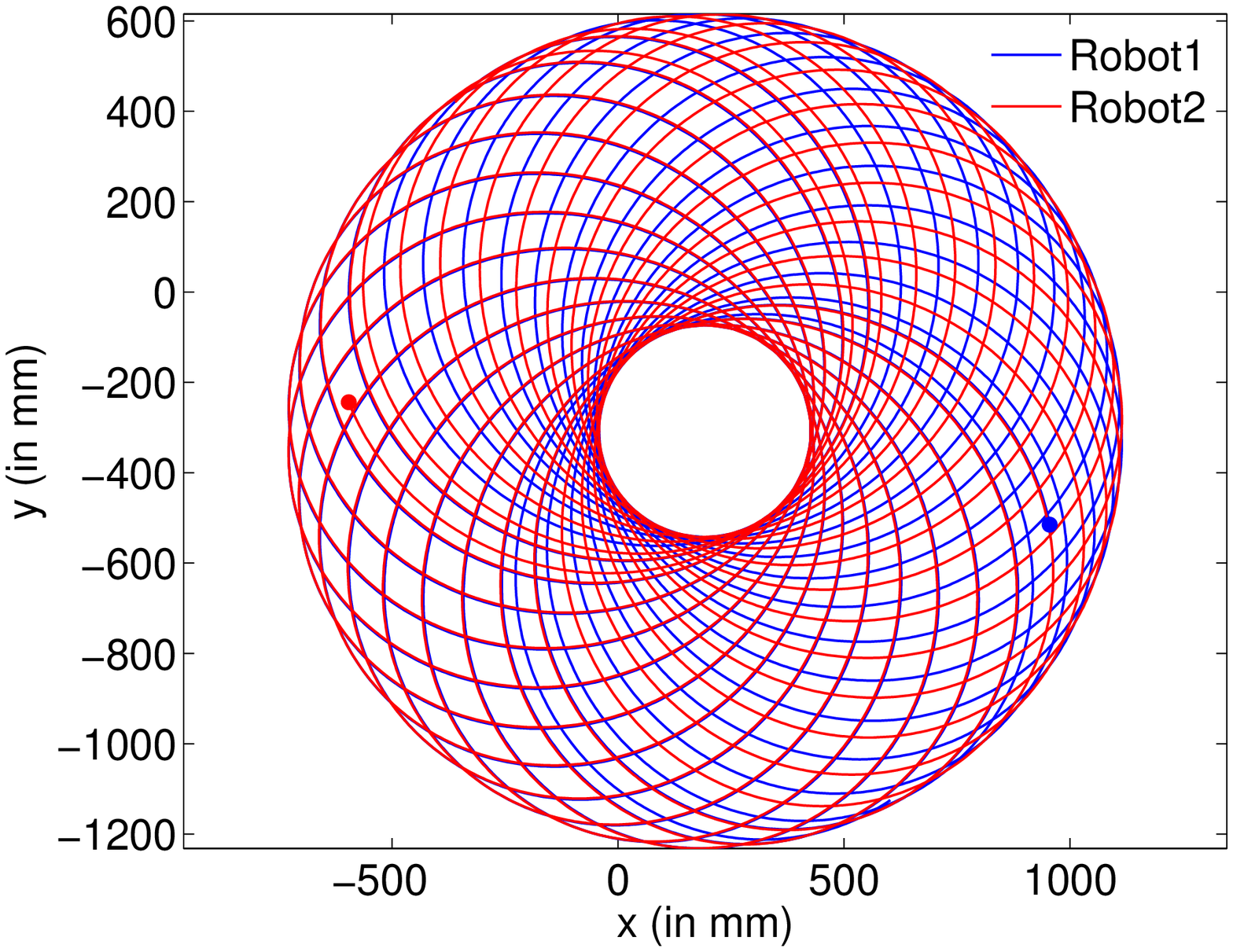}
		\subcaption{Trajectory (Simulation)}
	\end{subfigure}
	\hspace{0.1cm}
	\begin{subfigure}[t]{0.48\textwidth}
		\centering		
		\includegraphics[scale = 0.425, trim = 10 0 40 0, clip = false]{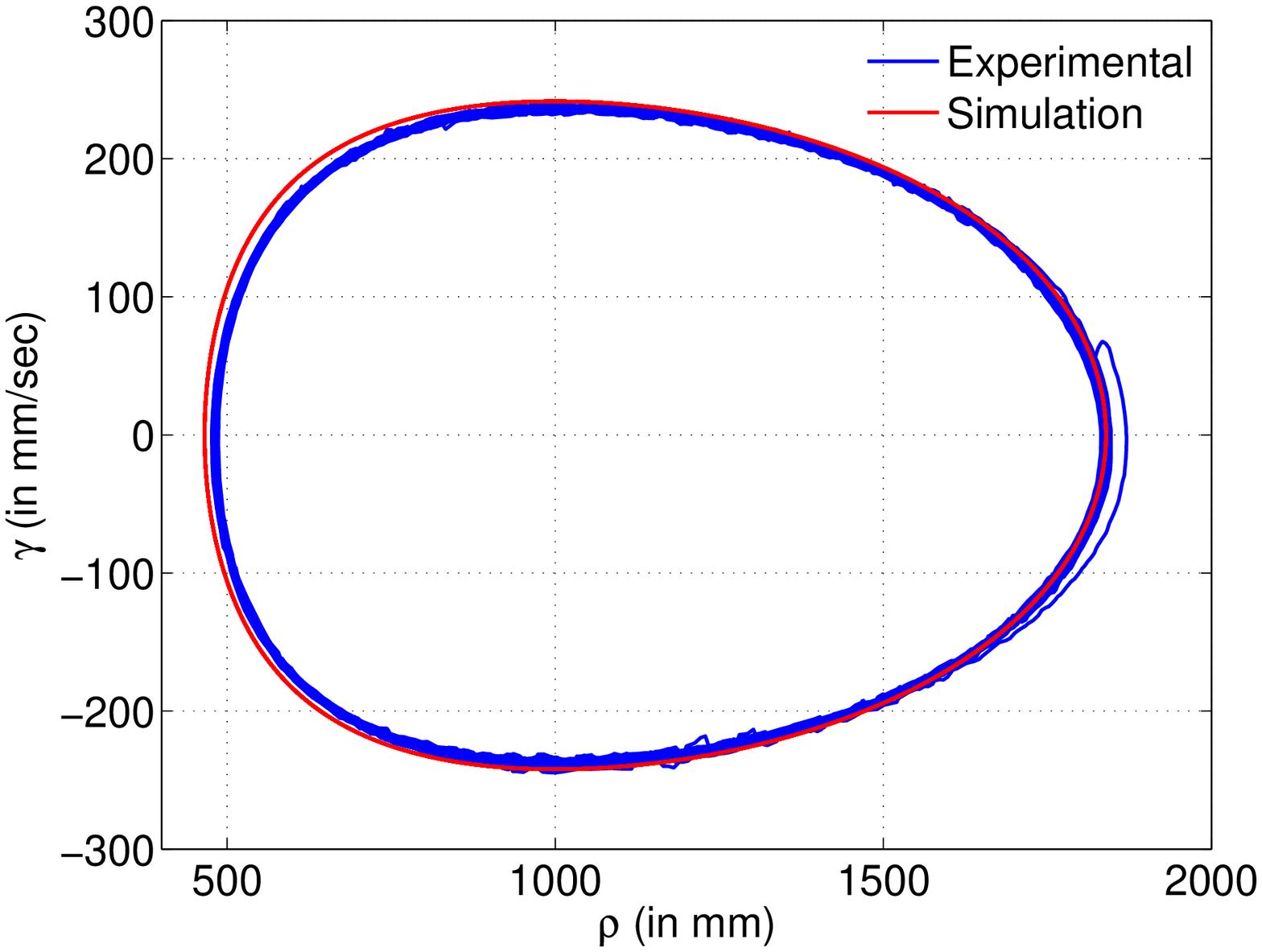}
		\caption{Phase plot}
		\label{fig:mmc_dis_phase}
	\end{subfigure}
	\hspace{0.1cm}
	\begin{subfigure}[t]{0.48\textwidth}
		\centering		
		\includegraphics[scale = 0.425, trim = 10 0 40 0, clip = false]{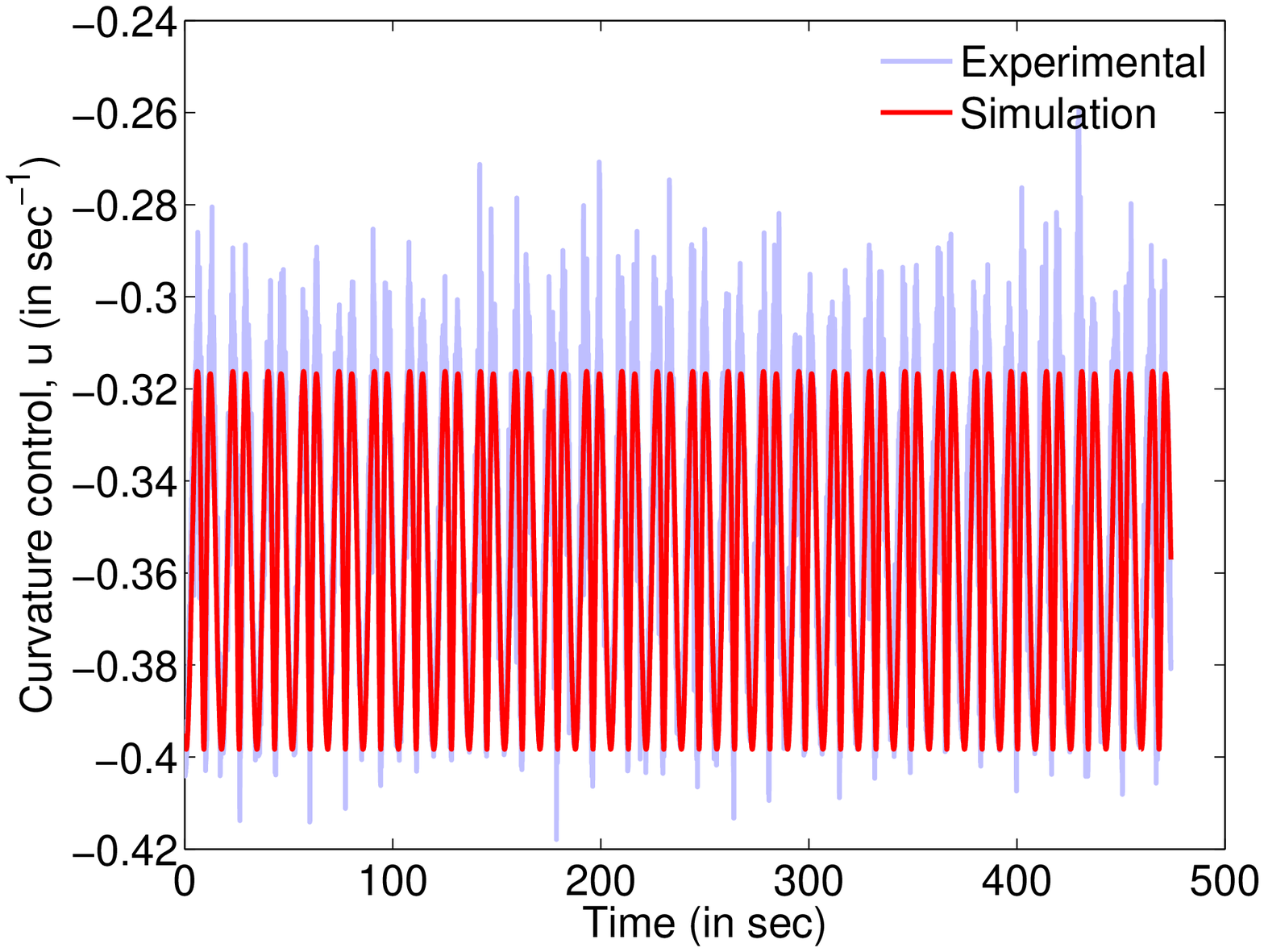}
		\caption{Curvature}
		\label{fig:mmc_dis_curvature}
	\end{subfigure}
	\hspace{0.1cm}
	\begin{subfigure}[t]{0.48\textwidth}
		\centering	
		\includegraphics[scale = 0.425, trim = 10 0 40 0, clip = false]{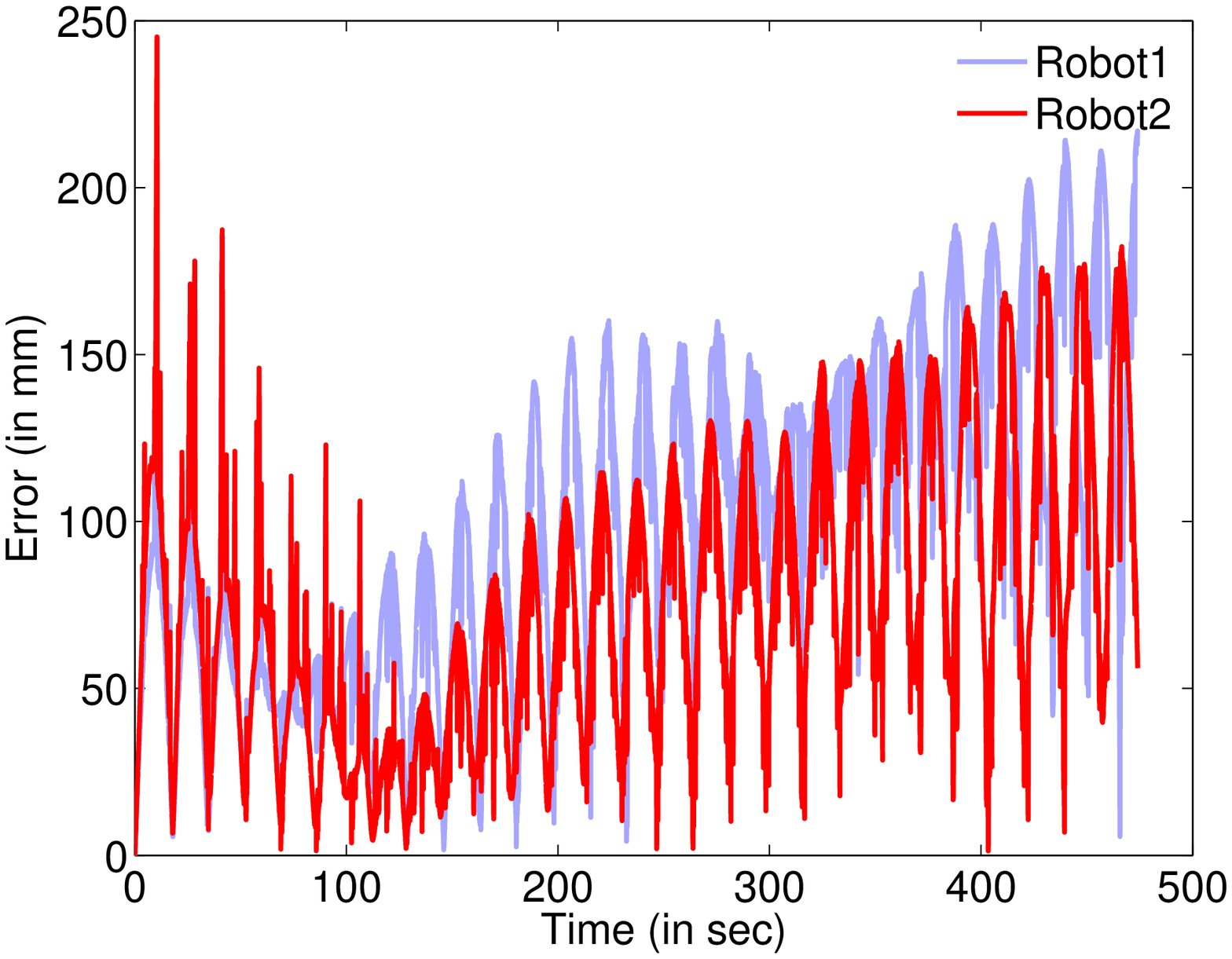}
		\caption{Error}
		\label{fig:mmc_dis_error}
	\end{subfigure}
	\hspace{0.1cm}
	\begin{subfigure}[t]{0.48\textwidth}
		\centering		
		\includegraphics[scale = 0.425, trim = 10 0 40 0, clip = false]{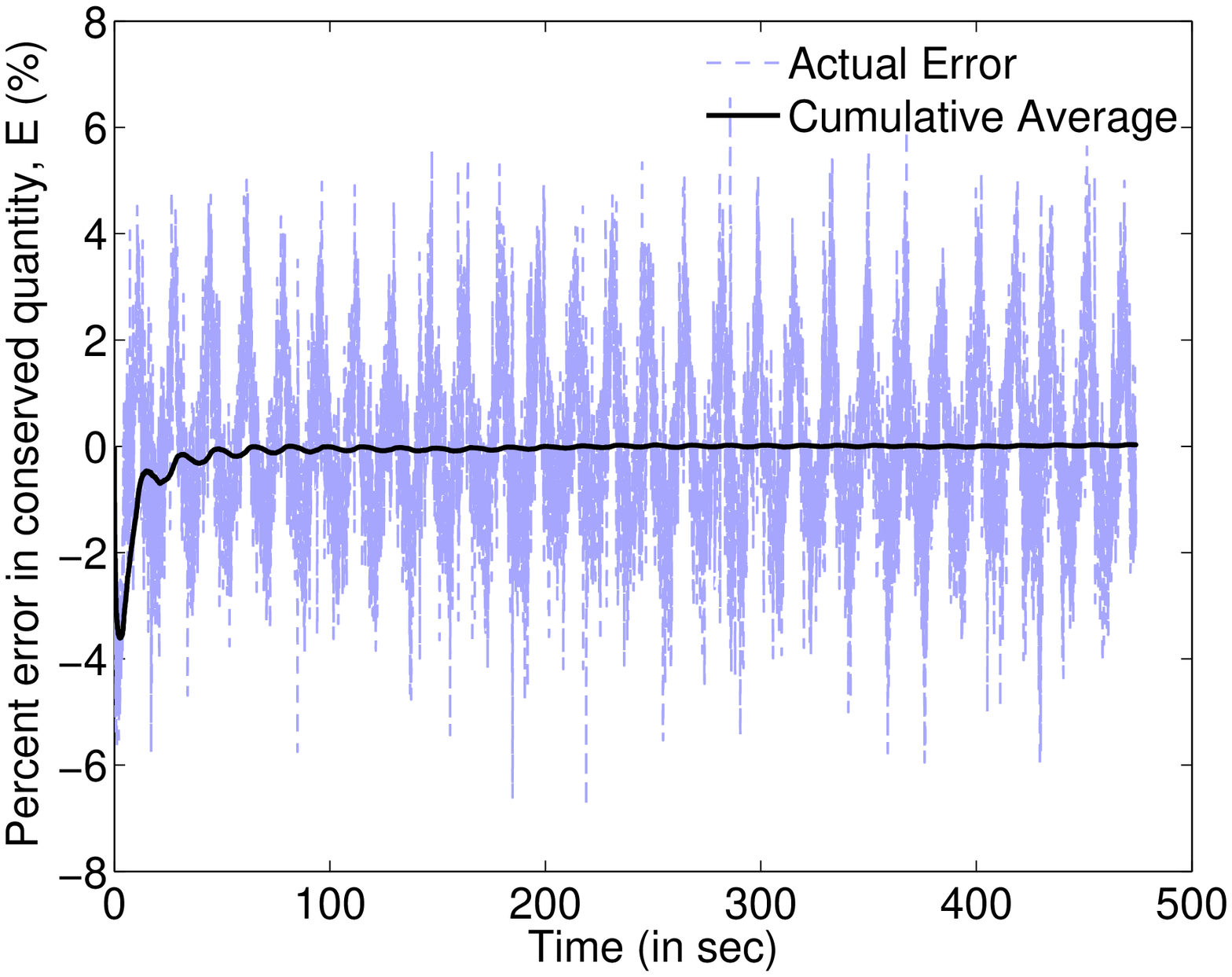}
		\caption{Percentage error in conserved quantity}
		\label{fig:mmc_dis_conserved}
	\end{subfigure}
	\caption{Performance of the system significantly improved upon addition of the dissipative control term (with $\mu = 0.001$ mm$^{-1}$, $\nu_1 = \nu_2 = 200$ mm/sec and $k_d = 1 \times 10^{-15}$ mm$^{-6}$sec$^3$)  }
	\label{fig:mmc_dis}
\end{figure*}
%

Here we will describe the implementation results of MMC, difficulties of pure MMC law (\ref{eq:curvature_MMC}) and usefulness of using the dissipative control term (\ref{eq:dissipation}). First, the original MMC law (\ref{eq:curvature_MMC}) is used to control the robots. The observed trajectories are then compared with theoretically predicted ones. To generate the ideal trajectories in discrete time, we integrate the reduced system dynamics (\ref{eq:rho-gamma}). Considering the conserved quantity in the system (\ref{eq:conserved}), we used the method described in \cite{austin1993almost} for integration instead of general ODE solver, which otherwise would not be able to keep the quantity $E(\rho,\gamma)$ constant to its initial value. From the integrated values of $\rho^k$ and $\gamma^k$, we reconstruct the trajectories (i.e. $\mathbf{r_i}^k, \mathbf{x_i}^k, \mathbf{y_i}^k$ for $i = 1,2$) \cite{mischiati2012dynamics}, where $k$ denotes the time index. At each time instance $t^k$, the error ($e_i^k$) is then calculated as: $e_i^k = |\mathbf{r}_{i,expt}^k - \mathbf{r}_{i,ideal}^k|$. \par 

The plots of a sample run using pure MMC law \eqref{eq:curvature_MMC} are given in Fig \ref{fig:mmc_traj}. The parameters for this run are, $\mu = 0.001$ mm$^{-1}$ and $\nu_1 = \nu_2 = 200$ mm/sec. It can be seen from the figures that the experimental trajectories diverge from the ideal ones and thus the error also keeps increasing over time in Fig \ref{fig:mmc_error}. This also affects the phase plot in Fig \ref{fig:mmc_phase} as we can clearly see the spiraling out type behavior as opposed to the ideal periodic behavior. Possible reasons behind this have already been discussed in section \ref{sec:theory_MMC_dissipation}. \par
%
%
\begin{figure*}[!htp]
	\centering
	\begin{subfigure}[t]{0.48\textwidth}
		\centering
		\includegraphics[scale = 0.425, trim = 10 0 40 0, clip = false]{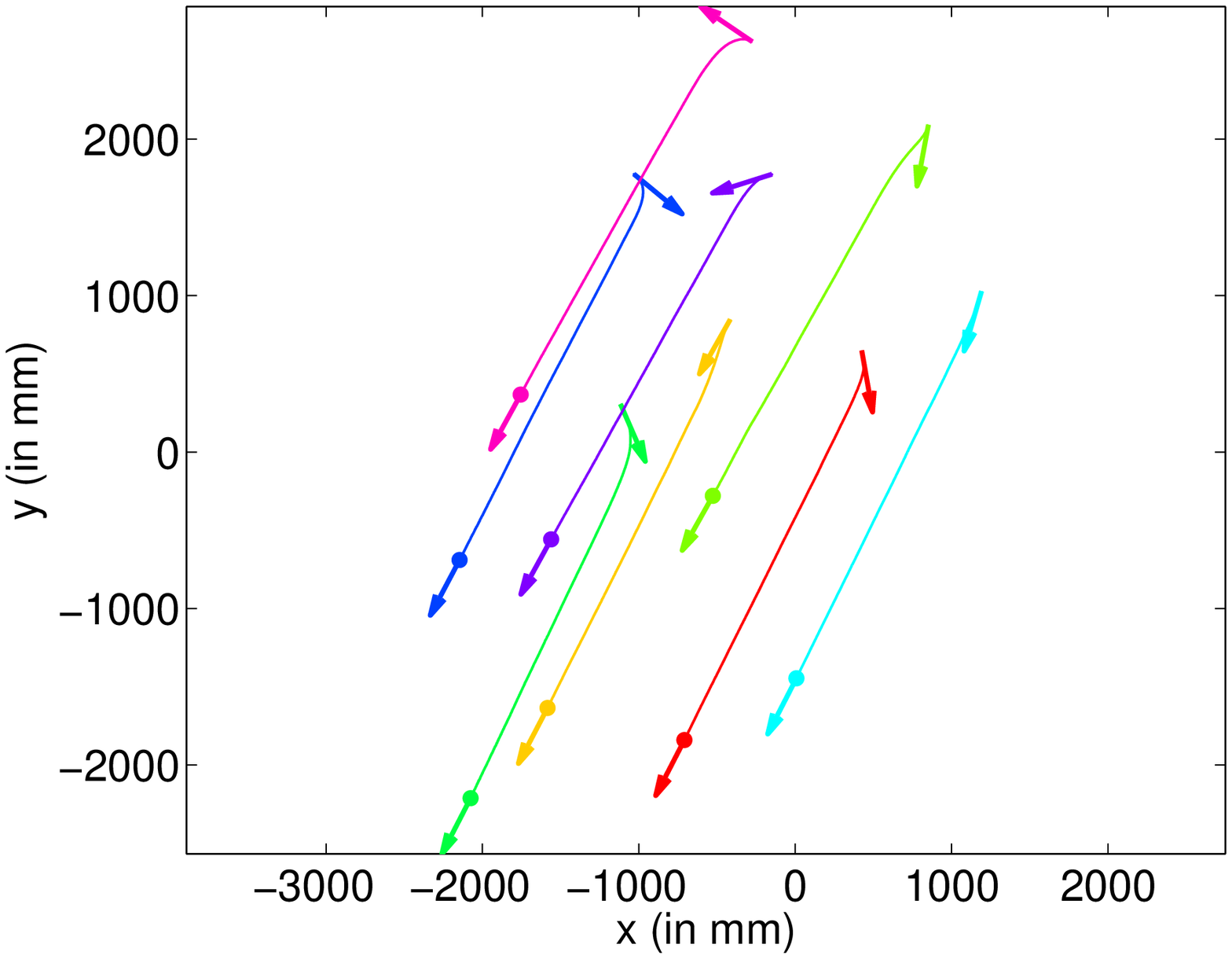}
		\caption{Trajectories: 8 agents, flocking}		
		\label{fig:tva_traj_1}	
	\end{subfigure}
	\hspace{0.1cm}
	\begin{subfigure}[t]{0.48\textwidth}
		\centering
		\begin{scriptsize}
			\psfrag{Cost}[][]{$\Theta(t)$}
			\includegraphics[scale = 0.425, trim = 10 0 40 0, clip = false]{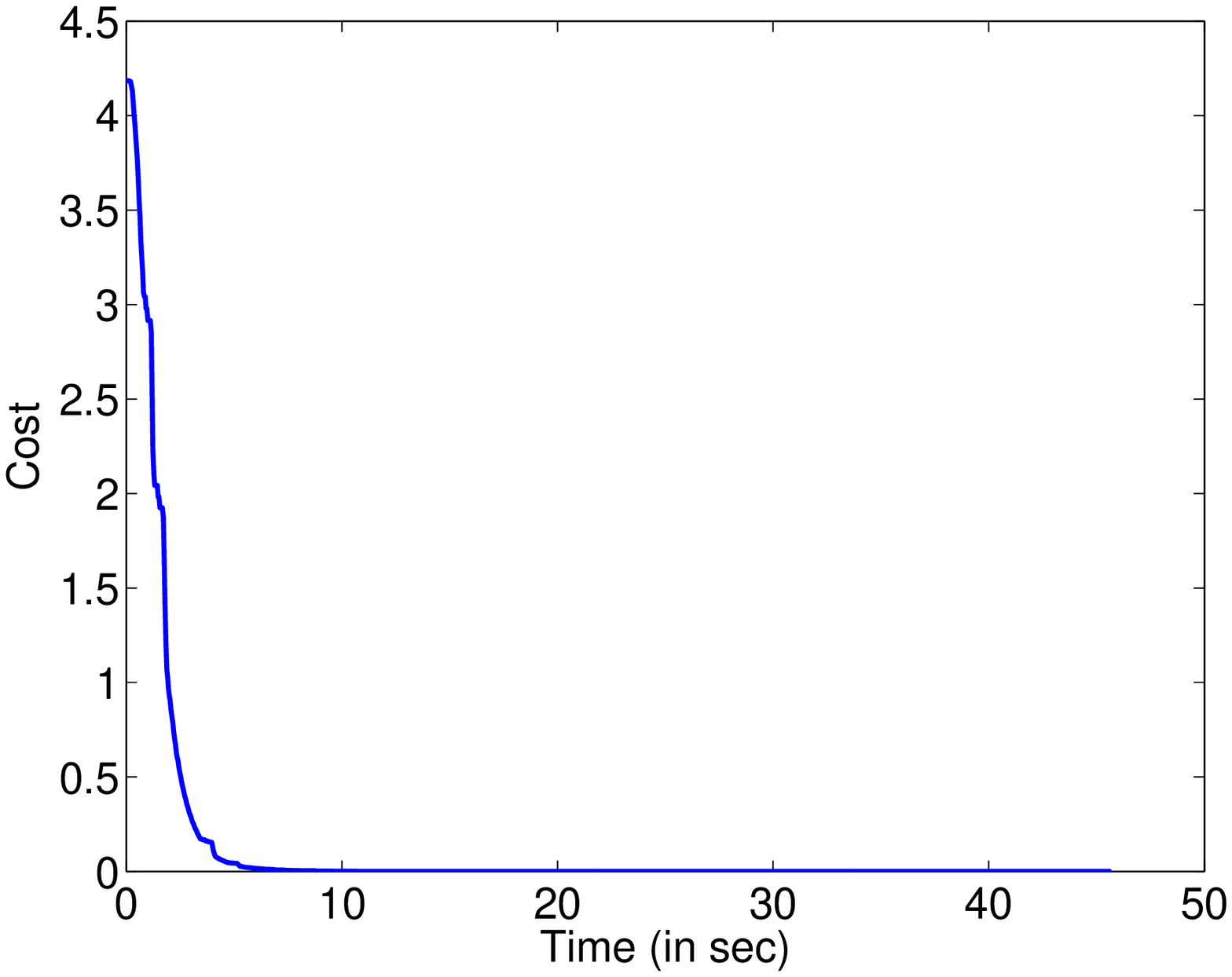}
		\end{scriptsize}
		\caption{Contrast function: 8 agents, flocking}
		\label{fig:cost_1}
	\end{subfigure}
	\hspace{0.1cm}
	\begin{subfigure}[t]{0.48\textwidth}
		\centering
		\includegraphics[scale = 0.425, trim = 10 0 40 0, clip = false]{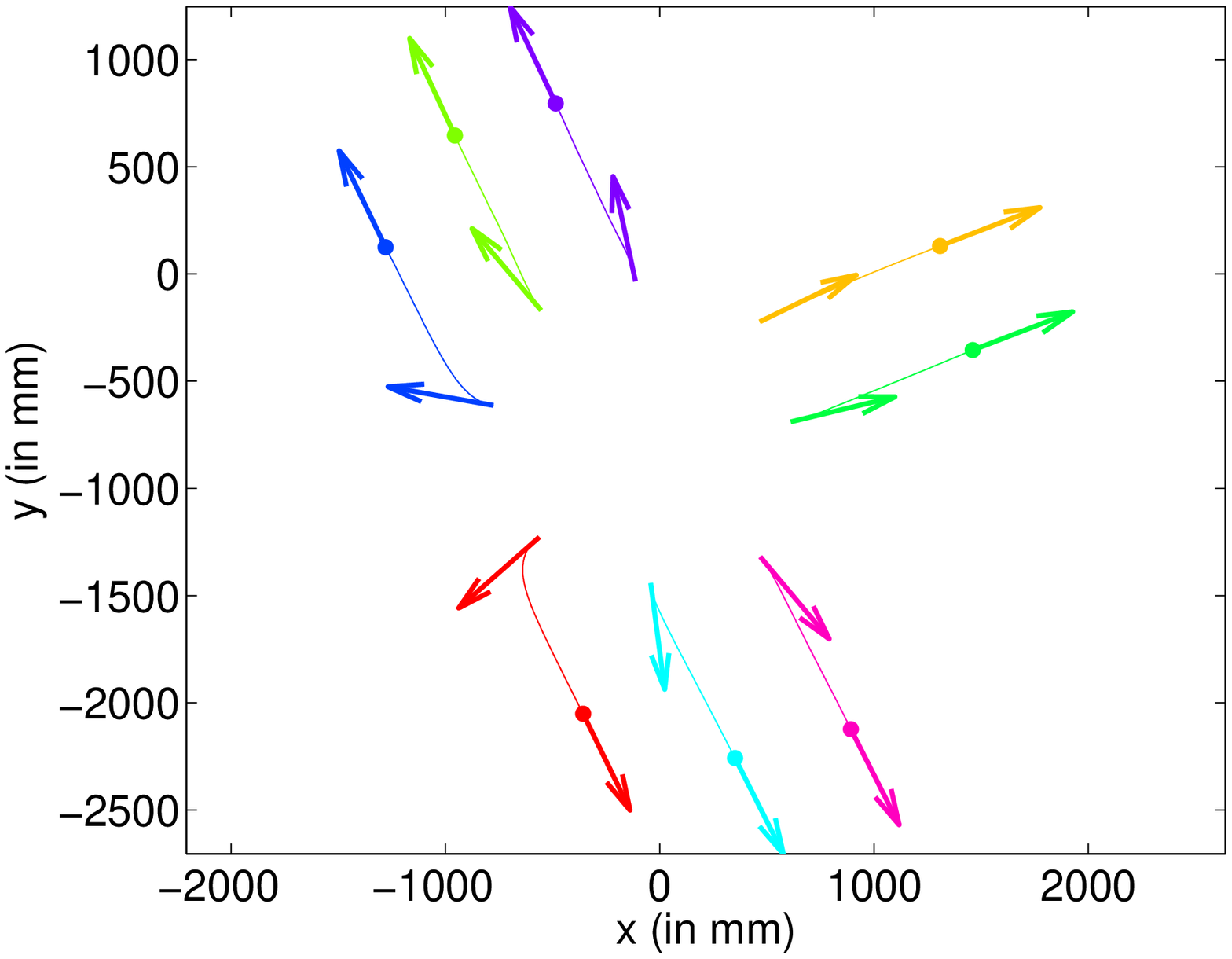}
		\caption{Trajectories: 8 agents, splitting}
		\label{fig:tva_traj_2}
	\end{subfigure}
	\hspace{0.1cm}
	\begin{subfigure}[t]{0.48\textwidth}
		\centering
		\begin{scriptsize}
			\psfrag{Cost}[][]{$\Theta(t)$}
			\includegraphics[scale = 0.425, trim = 10 0 40 0, clip = false]{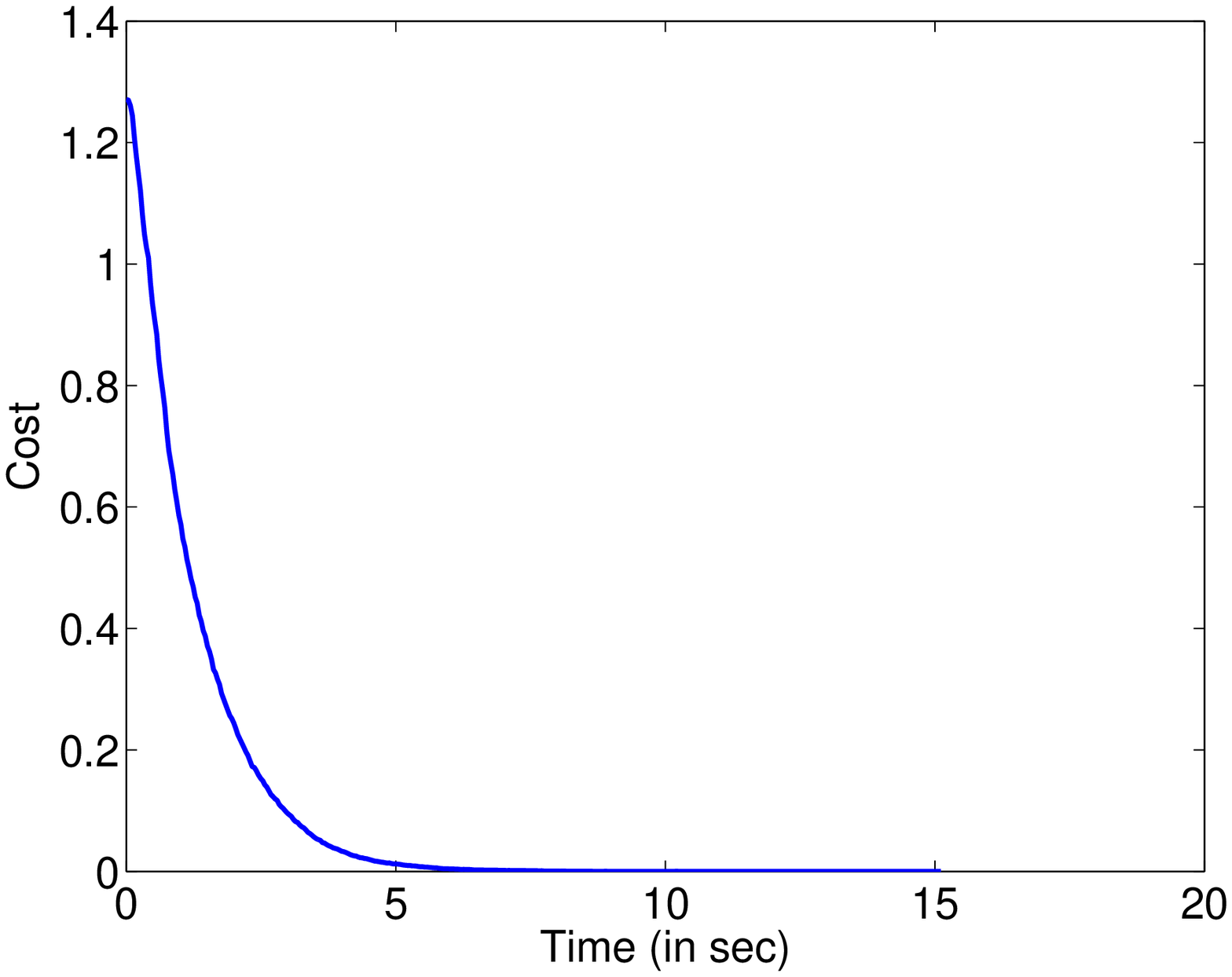}	
		\end{scriptsize}
		\caption{Contrast function: 8 agents, splitting}	
		\label{fig:cost_2}
	\end{subfigure}
	\hspace{0.1cm}
	\begin{subfigure}[t]{0.48\textwidth}
		\centering
		\includegraphics[scale = 0.425, trim = 10 0 40 0, clip = false]{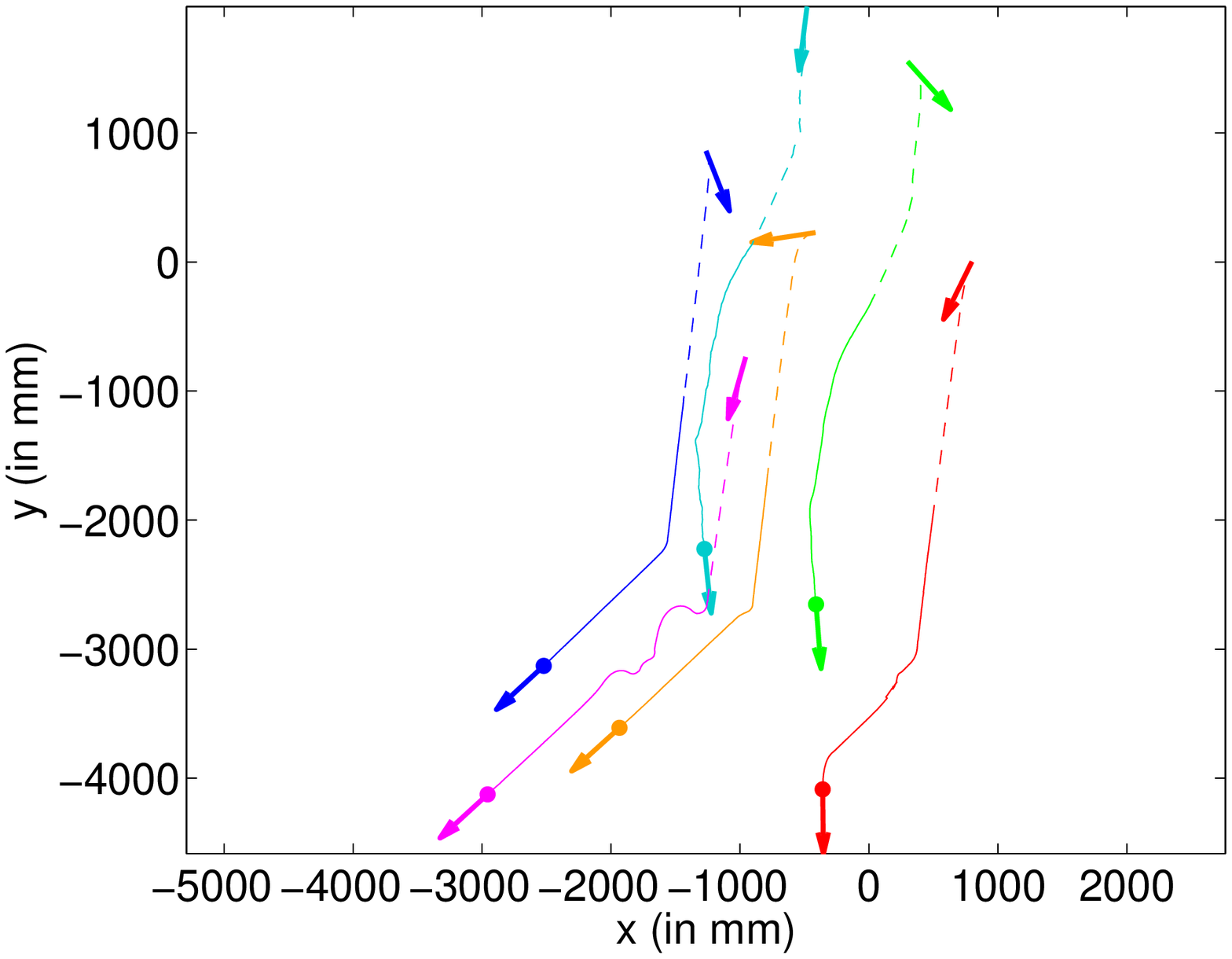}
		\caption{Trajectories: 6 agents, perturbation}		
		\label{fig:tva_traj_3}
	\end{subfigure}
	\hspace{0.1cm}
	\begin{subfigure}[t]{0.48\textwidth}
		\centering
		\begin{scriptsize}
			\psfrag{Cost}[][]{$\Theta(t)$}
			\includegraphics[scale = 0.425, trim = 10 0 40 0, clip = false]{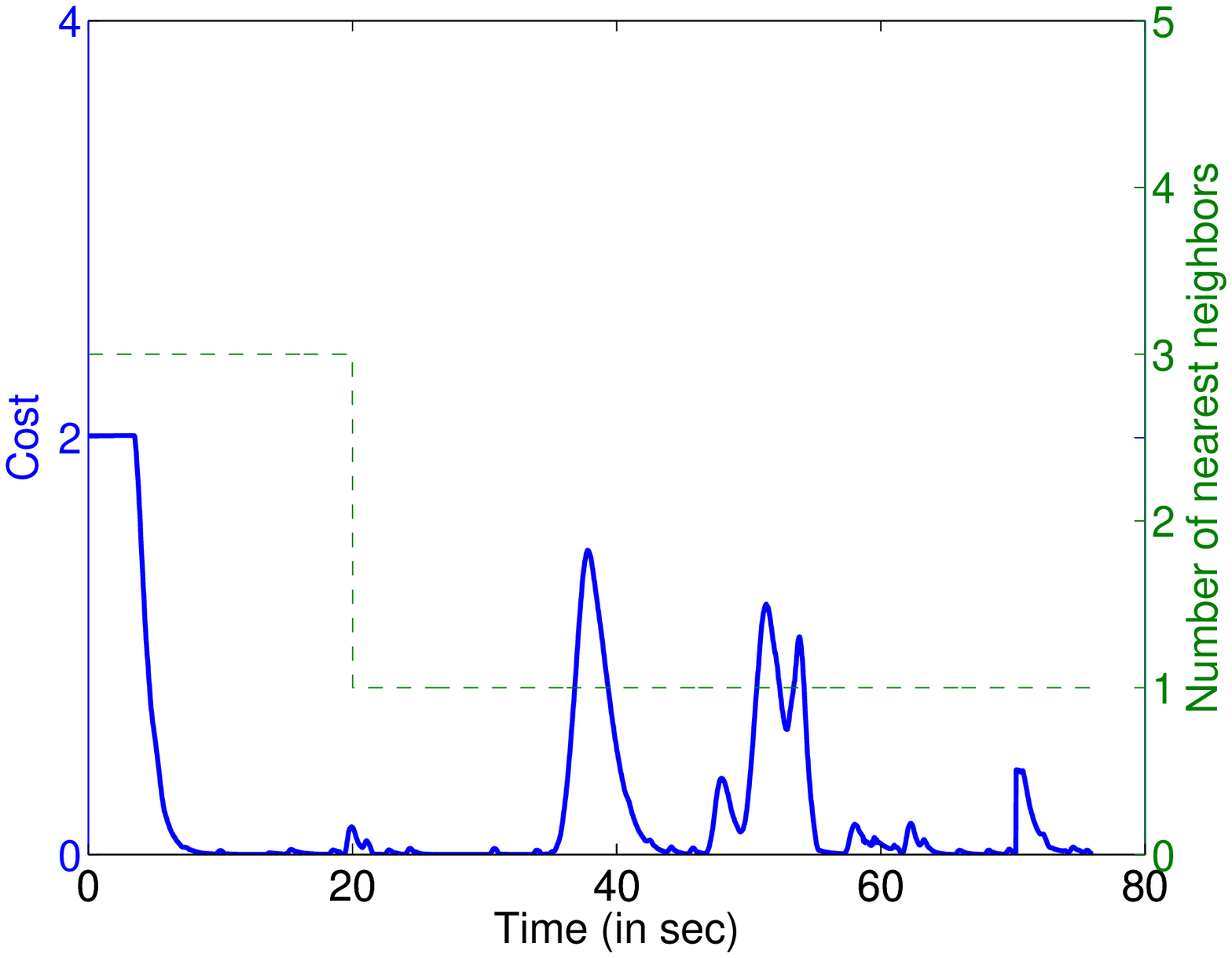}
		\end{scriptsize}
		\caption{Contrast function: 8 agents, perturbation}
		\label{fig:cost_3}
	\end{subfigure}	
	\caption{Trajectory and contrast functions of TVA for (i) Experiment 1 (a),(b) with 8 agents, demonstrates flocking behavior ($K = 3$); (ii) Experiment 2 (c),(d) with 8 agents, describes the splitting behavior due to low neighborhood size ($K = 1$), and (iii) Experiment 3 (e),(f) with 6 agents, shows perturbation can cause a swarm to split, the trajectory of the perturbing agent is not shown.  ($\mu = 1$ Hz and $\nu_i = 60$ mm/sec is kept fixed for all experiments) }
	\label{fig:tva}
\end{figure*}
%

To overcome the problem with original MMC law, a stabilization term has been introduced in the control law (\ref{eq:dissipation}). We applied the modified control law with $k_d = 1 \times 10^{-15}$ mm$^{-6}$sec$^3$ and kept all other parameters fixed. The resulting performance is quite satisfactory as shown in Fig. \ref{fig:mmc_dis_traj} (refer \cite{ISL_Youtube} for implementation video). The error (Fig \ref{fig:mmc_dis_error}) is also bounded ($\sim 250$ mm) within the size of the robots ($\sim 400$ mm). Also, it can be seen from Fig \ref{fig:mmc_dis_conserved} that the cumulative average of the error in conserved quantity converges to almost 0$\%$, indicating superior performance.

\subsection{Implementation of TVA}

We implemented the TVA control law (\ref{TVA_Rule}) in a 2 dimensional setting (i.e. $v_i(t)$ is ignored). As the implementation is in discrete time, we followed Algorithm \ref{TVA_Algo} in our implementation in order to avoid the singular case of $|\mathbf{v}_{_{COM}}| = 0 $. To demonstrate the performance of TVA control law, we designed three different experiments (refer \cite{ISL_Youtube} for implementation videos). In these experiments, the sonar sensors on the robots were activated to sense any obstacle in the direction of motion of the robots and if any robot can sense such an obstacle, it will simply apply a maximum \textit{turning rate} ($\omega^{sat}$) to avoid collision. The sonars are programmed to detect an obstacle only in close proximity ($\sim 300$ mm) of the robots. In all our experiments $\omega^{sat}$ is taken to be 50 rad/sec, forward speeds of all of the robots are kept constant at 60 mm/sec and the value of the parameter $\mu$ is chosen to be 1 Hz. 

\subsubsection{Experiment 1}
A system with eight agents is considered and we apply same TVA law to all of them. The neighborhood size is taken to be three (i.e. $K = 3$). The robots are initially placed in arbitrary positions and directions. The footprints of the robots and the corresponding contrast function, $\Theta(t) = \sum_{i} \Theta_i(t)$ is plotted against time in Fig \ref{fig:tva_traj_1}, \ref{fig:cost_1}. The initial and final directions of the robots are shown using arrows and the final positions of the robots are denoted using dots. It can be seen from Fig \ref{fig:cost_1} that the contrast function decays to zero very quickly which indicates perfect velocity alignment within the swarm. 

\subsubsection{Experiment 2}
Next we decreased the neighborhood size and made it one ($K = 1$), so that every robot only `communicates' with its closest neighbor. We chose the initial positions in such a way that they may form sub-clusters instead of moving as a single swarm. This behavior is called `\textit{splitting}' in a swarm. From Fig \ref{fig:tva_traj_2}, we can clearly see that the swarm of eight robots gradually split from each other and form three different clusters. It is to be noted that even if all the agents are not going in the same direction, the contrast function still converges to zero (Fig. \ref{fig:cost_2}). This happens because each of the robots are aligned with their nearest neighbors and hence each of the individual contrast functions ($\Theta_i(t)$) are zero. This experiment may explain the splitting phenomenon observable in nature.

\subsubsection{Experiment 3}
Lastly, we combined the above two experiments, and conducted an experiment using six robots in a swarm and another robot as a predator. A separate computer was used for manual control of the `predator' robot.

At the beginning, neighborhood size is kept at $K = 3$, such that the `communication' graph among the robots stays connected and they move as an entire swarm in a common direction. When the swarm comes close to the predator, the neighborhood size is decreased to one. As we are not using any onboard visual sensing and the sonar sensing is done only in very close region ($\sim 300$ mm), the change in neighborhood size is made manually. From Fig \ref{fig:cost_3}, we can see that the change in neighborhood size takes place at around 20 seconds and we can also see a tiny jump in the contrast function at that time. The predator then slowly approaches to one of the agents in the swarm, which abiding to its collision avoidance rule, turns to avoid the predator. In Fig \ref{fig:tva_traj_3}, the trajectories of the agents are drawn in dashed lines before the occurrence of this event and in solid lines afterwards. The trajectory of the predator robot in not shown in the figure. After creating the initial perturbation, the predator is slowly moved through the swarm causing some subsequent disturbances. These perturbations create a noticeable impact in the swarm. As the attacked agent turns, its neighbor also tries to align itself with that agent and so does its neighbor. This goes on until the communication graph becomes disconnected and a split in the swarm is then observed \cite{ISL_Youtube} just like in Experiment 2. As we can see in Fig \ref{fig:tva_traj_3}, the swarm is divided in two groups after the attack of the predator. The jumps in plot of the contrast function in Fig \ref{fig:cost_3} symbolize the perturbations caused by the external agent. The contrast function value eventually converges to zero after the members are aligned with their neighbors within each subgroup.
\section{Conclusion and Future Work}
\label{sec:5_Conclusion}
%
This paper has introduced a control strategy (TVA) which, based on local information, attempts to align the individual velocities, and the theoretical analysis for a special case of planar two-agent system has been complemented by experiments in a laboratory environment. Additionally, we chose MMC for implementation due to its dynamic coverage property with potential in many domains (resource harvesting, environmental monitoring, search and rescue missions etc.). Moreover, the exploitive nature of MMC (through coverage in a certain annular region) augments the exploratory nature of TVA (through making the agents move in a common direction). Future works will primarily focus on two areas, namely effect of a beacon (influencing both agents) into MMC, and understanding the effect of a covert leader (equipped with extra information) in a flock. Also, this study does not address how the system behavior will change in a noisy (or uncertain) environment. In future, the authors have plans to extend current results to study the effect of sensor noise (along the lines of \cite{MC_Stochastic}) and perform robustness analysis.
%
%
\section*{Acknowledgments}
The authors would like to thank P. S. Krishnaprasad and K. S. Galloway for their valuable comments and feedback.
\bibliographystyle{IEEEtran}
\bibliography{ICRA_refs}

\end{document}